\documentclass[conference]{IEEEtran}
\IEEEoverridecommandlockouts

\usepackage{microtype}
\usepackage{graphicx}
\usepackage{booktabs}
\usepackage{amsmath,amssymb,amsfonts}
\usepackage{amsthm}
\usepackage[square,numbers,sort&compress]{natbib}
\usepackage[hidelinks,pdfauthor={Xudong Wang, Chris Ding, Tongxin Li, Jicong Fan},pdftitle={GLASS: Graph-Language Alignment with Spherical Scoring for Transferable Graph-Level Anomaly Detection},pdfkeywords={graph-level anomaly detection, graph-language alignment, multimodal learning, cross-domain transfer, differential privacy}]{hyperref}
\usepackage{enumitem}
\usepackage{multirow}
\usepackage{xspace}
\usepackage{placeins}
\usepackage{cuted}
\usepackage{xcolor}

\usepackage{bm}
\providecommand{\uline}[1]{\underline{#1}} % Underlined table entries; no ulem dependency.

\setlist[itemize]{leftmargin=*,topsep=1pt,itemsep=1pt,parsep=0pt}

\theoremstyle{plain}
\newtheorem{proposition}{Proposition}

\newcommand{\ouralg}{\textsf{GLASS}\xspace}

\newcommand{\std}[2]{\ensuremath{#1_{\pm #2}}}

\newcommand{\best}[1]{{\bfseries\boldmath #1}}
\newcommand{\second}[1]{\uline{#1}}
\newcommand{\third}[1]{{\itshape #1}}

\title{GLASS: Graph-Language Alignment with Spherical Scoring for Transferable Graph-Level Anomaly Detection}

\author{
\IEEEauthorblockN{Xudong Wang, Chris Ding, Tongxin Li, Jicong Fan\textsuperscript{*}}
\IEEEauthorblockA{
School of Data Science,\\
The Chinese University of Hong Kong, Shenzhen (CUHK-Shenzhen), China\\
xudongwang@link.cuhk.edu.cn, \{chrisding, litongxin, fanjicong\}@cuhk.edu.cn
}
\thanks{\textsuperscript{*}Corresponding author: Jicong Fan.}
}

\begin{document}
\maketitle

\begin{abstract}
We introduce \ouralg, a framework for graph-level anomaly detection (GLAD) that achieves robust cross-domain transferability through graph-language alignment on the unit hypersphere. \ouralg builds a unified representation space by aligning a structure-aware graph encoder with an instruction-aware text embedding via a multi-slice soft cosine objective. Our framework serializes local, global, and semantic graph properties into a compact Graph Descriptor Prompt (GraphDP), creating a text bridge that enables domain-agnostic anomaly scoring. By enforcing multi-scale consistency through Matryoshka representation slices, the model captures anomalous deviations at multiple levels of granularity. We formulate anomaly detection as density estimation on the aligned hypersphere and introduce Spherical Multi-Modal Scoring (SMS), which instantiates von Mises-Fisher kernel density estimators in both graph and text embedding spaces. This probabilistic formulation recovers angular 1-nearest-neighbor scoring in the high-concentration limit, motivates the practical mean $k$-nearest-neighbor scorer, and provides a principled fusion of structural and semantic anomaly signals. The shared text embedding space further serves as a cross-domain bridge: by encoding a target domain's GraphDP without target-domain training data, \ouralg performs zero-shot anomaly detection, and with only a handful of normal examples, few-shot adaptation via reference-set calibration. For privacy-sensitive deployment, we extend reference-set calibration with a bounded joint graph-text kernel summary that provides graph-record differential privacy while keeping the encoders fixed independently of the private target references. Across twelve benchmarks and three meta-domains, \ouralg obtains the best average AUROC and rank compared with recent advanced GLAD baselines and enables effective cross-domain transfer.
\end{abstract}

\begin{IEEEkeywords}
graph-level anomaly detection, graph-language alignment, multimodal learning, cross-domain transfer, differential privacy
\end{IEEEkeywords}

\section{Introduction}
Graphs are primary data objects in molecular screening, protein analysis, social computing, and many other data-mining applications. In graph-level anomaly detection (GLAD), a model observes normal graph instances and must rank unseen graphs by anomalousness~\citep{akoglu2015graph,ma2021comprehensive,qiao2024deep}. Existing GLAD systems usually learn a detector for one dataset: molecules have atom and bond attributes, protein graphs have different structural attributes, and social graphs may have no node attributes at all. This dataset-specific assumption becomes restrictive when a new domain has only a few trusted normal graphs, or no target-domain training data.

The central difficulty is not merely discriminative capacity; it is the absence of an interoperable representation and a common anomaly-scoring principle across heterogeneous graph families. A molecular ring, a protein contact pattern, and a social community are not directly comparable as raw node features. However, they can all be described through structural language: graph size, density, degree profiles, clustering, motifs, connectivity, core structure, and spectral summaries. This observation suggests a scalable route toward transferable GLAD: align graph representations with an instruction-aware language embedding space that can encode the same structural evidence across domains.

\ouralg\footnote{Code: \url{https://github.com/MathAdventurer/GLASS}.} instantiates this idea through a coherent pipeline. First, Local Topology Descriptors (LTDs) provide stable node-level structural evidence even when raw attributes are missing or incomparable. Second, GraphDP serializes local, global, and spectral graph properties into a compact structured prompt. Third, a frozen instruction-aware text encoder embeds this prompt and provides stable text anchors; we use Qwen3-Embedding~\citep{qwen3embedding}, an instruction-aware embedding model with native MRL support, as the embedding function rather than a generative reasoner. Finally, a structure-aware graph encoder is trained to align with these anchors on the unit hypersphere through native Matryoshka representation slices.

The hyperspherical view turns GLAD into density estimation. In the aligned graph--text space, a query graph is anomalous when it lies in a low-density region relative to normal references. We formalize this through a von Mises--Fisher (vMF) kernel-density view; its exact high-concentration limit is angular 1-NN, which motivates a robust mean $k$-NN implementation. This scoring family supports three deployment regimes without changing the architecture: single-domain GLAD uses target normal references; zero-shot transfer uses source-domain normal references; and few-shot adaptation augments the reference set with trusted target normal graphs.

Reference availability also has a privacy dimension: an institution may possess normal graphs but be unable to share their individual representations. This motivates a private-reference extension rather than private fine-tuning of the entire text model. DP-MERF~\citep{harder2021dpmerf} and efficient private KDE~\citep{wagner2023privatekde} establish one-shot kernel-summary mechanisms. We exploit the equivalence between a bounded vMF similarity kernel and a Gaussian kernel restricted to the sphere to privatize a joint graph--text reference summary. The encoders remain public and frozen; only the summary is released. This extension preserves the representation and density principle, but replaces exact reference-neighbor scoring with a finite-bandwidth approximation whose privacy and approximation costs are evaluated separately.

This design addresses three impediments to transferable GLAD. First, it replaces dataset-specific raw features with a structural-language bridge. Second, it stabilizes cross-domain alignment by freezing the instruction-aware text space and learning graph-side mappings only. Third, it provides a nonparametric spherical scoring rule that is naturally compatible with reference-set calibration. Empirically, \ouralg achieves strong single-domain performance across twelve benchmarks and reveals interpretable transfer regimes: molecular and protein graphs transfer well because their GraphDP descriptions share substructure semantics, while larger biological-to-social shifts require richer source weighting or target references.

Our contributions are:
\begin{itemize}
    \item We propose a transferable GLAD paradigm that formulates anomaly detection as density estimation on an aligned graph--language hypersphere.
    \item We instantiate this paradigm in \ouralg, integrating LTDs, GraphDP, frozen instruction-aware text anchors, native Matryoshka slices, and Spherical Multi-Modal Scoring.
    \item We give a rigorous vMF KDE analysis showing why angular nearest-neighbor scoring arises as the high-concentration density limit.
    \item Our empirical results demonstrate \ouralg's superior single-domain performance and effective zero-/few-shot cross-domain transfer across twelve widely used graph benchmarks and three meta-domains.
    \item We extend reference calibration to graph-record differential privacy using a bounded joint kernel summary, and evaluate privacy budgets, reference sizes, modalities, and one-round distributed-summary noise with independently trained public-source encoders.
\end{itemize}

\section{Related Work}
\textbf{Graph-level anomaly and graph OOD detection.}
Graph anomaly detection has been studied at node, edge, subgraph, and graph levels~\citep{akoglu2015graph,ma2021comprehensive,qiao2024deep,wang2025learnable}. GLAD differs from node-level anomaly detection because the object being ranked is an entire graph and the normality signal may lie in global topology, local motifs, node features, or their interaction. Class-based GLAD and OOD protocols emphasize consistent normal-only training and held-out anomaly evaluation~\citep{liu2023good,wang2025unifying}. Deep one-class methods adapt hypersphere learning to graph encoders, as in OCGIN and OCGTL~\citep{qiu2022raising,zhao2023using}. GLocalKD uses global-local knowledge distillation~\citep{ma2022deep}; GLADC and CVTGAD use contrastive or cross-view objectives~\citep{luo2022deepgladc,li2023cvtgad}; SIGNET introduces a subgraph information bottleneck~\citep{liu2023signet}. Recent reconstruction and unified anomaly frameworks such as MUSE and UniFORM further improve graph-level baselines~\citep{kim2024muse,song2025uniform}. These methods are strong single-domain detectors, but their standard protocol assumes normal target-domain graphs.

\textbf{Structural features, graph kernels, and graph language.}
Classical graph kernels remain competitive in GLAD because structural statistics are robust when node attributes are missing or incomparable. We include propagation-kernel and Weisfeiler-Lehman graph kernels with one-class SVM and isolation forest detectors~\citep{neumann2016propagation,shervashidze2011weisfeiler,liu2008isolation}. Local degree-profile and local topological-profile work shows that local statistics can be strong graph representations~\citep{cai2018ldp,adamczyk2023ltp}. \ouralg uses these statistics differently: they are shared evidence for graph encoding and GraphDP serialization. Graph-language methods study graph serialization, graph prompting, and alignment between GNNs and language embeddings~\citep{fatemi2024talk,perozzi2024graphtoken,wang2024llms,ye2024instructglm}; GraphMAE2 and OFA improve generalization for graph representation learning~\citep{hou2023graphmae2,liu2024ofa}; AnomalyGFM and ARC explore zero-/few-shot anomaly detection mainly in node-level settings~\citep{liu2024arc,qiao2025anomalygfm}. \ouralg instead uses frozen text embeddings as structural anchors for unsupervised graph-level anomaly scoring.

\textbf{Spherical and density-based scoring.}
Hyperspherical representations reduce sensitivity to feature norms and support angular density models~\citep{banerjee2005vmf,ming2023cider,bai2024hypo}. Kernel density estimation and nearest-neighbor anomaly scoring are classical nonparametric tools. On the sphere, vMF kernels provide a natural directional density family~\citep{pelletier2005kernel,garcia2013exact}. \ouralg connects this statistical view to graph--language alignment: the text bridge supplies a transferable coordinate system, while spherical KDE and its nearest-neighbor limit provide the anomaly-scoring principle.

\textbf{Private density summaries and transfer.}
DP-MERF releases bounded random-feature mean embeddings once and reuses their private output~\citep{harder2021dpmerf}. Wagner et al.\ develop locality-sensitive quantization for efficient private KDE, including mechanisms based on random Fourier features~\citep{wagner2023privatekde}. These are established privacy tools, not contributions claimed by \ouralg. We adapt their summary perspective to graph--text hyperspherical coordinates and target normal-reference calibration. Analytical Gaussian calibration~\citep{balle2018analytic} supplies the release noise. Our protected unit is one complete target reference graph, jointly including both views; the guarantee does not cover public-source training or query confidentiality. This differs from private model training and does not require federated optimization of a large text encoder.

\section{Problem Setup}
Let $G=(V,E,X)$ be a graph with node set $V$, edge set $E$, and optional node attributes $X$. In class-based GLAD benchmarks~\citep{liu2023good,wang2025unifying}, training uses normal graphs only, $\mathcal{D}_{\mathrm{tr}}^0=\{G_i\}_{i=1}^{N}$, while evaluation ranks held-out normal graphs against anomalous-class graphs. A detector outputs $s(G)$, where larger scores indicate stronger anomalousness. Single-domain scoring uses target-domain training normals as references. Zero-shot transfer uses source-domain normal references and no target training data. Few-shot calibration adds $k$ trusted target normal graphs to the reference set at inference time only; model parameters remain fixed.

\begin{figure*}[t]
    \centering
    \includegraphics[width=0.96\linewidth]{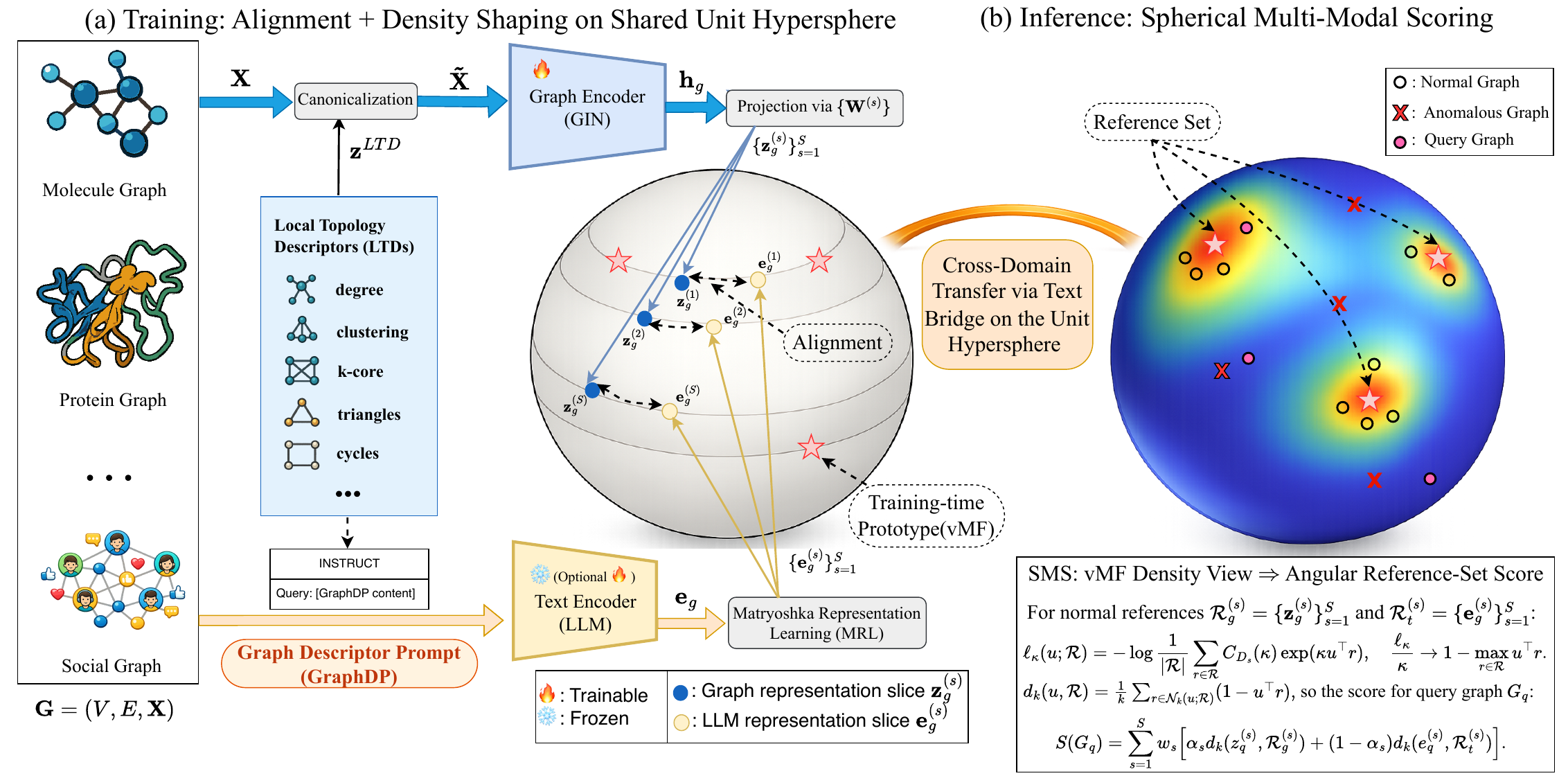}
    \caption{Overview of \ouralg. Local Topology Descriptors (LTDs) and graph attributes feed the graph encoder, while GraphDP embeds the same structural evidence through a frozen instruction-aware text encoder. Native Matryoshka text slices anchor graph-side alignment, and vMF prototypes shape training geometry. Inference uses angular normal-reference scoring; deployment regimes change reference construction. The optional text-adaptation branch denotes only the appendix LoRA diagnostic: text parameters are frozen in the main and private-reference experiments. The private extension replaces individual references with a released kernel summary (Section~\ref{sec:private_method}).}
    \label{fig:framework}
\end{figure*}

\section{Method}
\label{sec:method}

\ouralg follows a single principle: transferable graph-level anomaly detection should estimate normal density in a representation space whose coordinates remain comparable across graph families. To obtain such a space, as shown in Fig.~\ref{fig:framework},  \ouralg constructs two coupled views of each graph. The graph view maps topology and optional attributes into a structure-aware graph code. The language view serializes the same structural evidence into a compact Graph Descriptor Prompt (GraphDP) and embeds it with an instruction-aware text encoder. Training aligns the graph code to Matryoshka text anchors on the unit hypersphere and shapes the normal geometry with lightweight spherical prototypes. Inference then uses the same graph/text reference-set scorer in single-domain, zero-shot, and few-shot regimes; only the normal reference set changes.

\subsection{GraphDP and Dual Encoders}
\label{sec:graphdp_encoders}

\paragraph{Local topology descriptors and canonicalized node features.}
Let $G=(V,E,X)$ be an input graph, with optional node attributes $x_v$ for $v\in V$. For each node $v$, \ouralg first computes deterministic local topology descriptors (LTDs). Let $\mathcal{N}(v)$ denote the neighbor set and $\deg(v)$ the degree. The default LTD vector contains a degree profile and a local topological profile:
\begin{equation}
\begin{aligned}
z^{\mathrm{LTD}}_v= [
&d_v,\ d_v^{\min},\ d_v^{\mathrm{mean}},\ d_v^{\max},\ d_v^{\mathrm{std}},\\
&c(v),\ k_c(v),\ \widehat{T}(v),\ \widehat{C}_4(v)] .
\end{aligned}
\label{eq:ltd}
\end{equation}
where $d_v=\deg(v)$ and $(d_v^{\min},d_v^{\mathrm{mean}},d_v^{\max},d_v^{\mathrm{std}})$ are neighbor-degree statistics over $\mathcal{N}(v)$. Here $c(v)$ is the local clustering coefficient, $k_c(v)$ is the $k$-core index, and $\widehat{T}(v)$ and $\widehat{C}_4(v)$ denote triangle and 4-cycle estimates. These rooted-neighborhood statistics are available even when raw attributes are missing and provide a common structural vocabulary across molecules, proteins, and social graphs.

The LTD vector is concatenated with raw attributes when they exist. If attributes are absent, the input consists only of LTDs. The concatenated vector is normalized within each graph and passed through a small canonicalization MLP:
\begin{equation}
\tilde{x}_v
=
\phi_{\theta}\!\left(
\operatorname{znorm}_{G}
\bigl([z^{\mathrm{LTD}}_v;\,x_v]\bigr)
\right),
\label{eq:canon}
\end{equation}
where $\operatorname{znorm}_{G}$ denotes per-graph feature normalization over nodes. This step makes the graph encoder less sensitive to dataset-specific feature scales while preserving local structural evidence.

\paragraph{Structure-aware graph encoder.}
A GIN backbone~\citep{xu2019gin} maps the canonicalized node features into node representations. With $h_v^{(0)}=\tilde{x}_v$, the $\ell$-th message-passing layer is
\begin{equation}
\begin{aligned}
h_v^{(\ell)}=
\operatorname{MLP}_{\ell}\!\Big((1+\epsilon_{\ell})h_v^{(\ell-1)}+\sum_{u\in\mathcal{N}(v)}h_u^{(\ell-1)}
\Big).
\end{aligned}
\label{eq:gin}
\end{equation}
After $L$ layers, \ouralg concatenates mean and componentwise max readouts to obtain $r_g(G)\in\mathbb{R}^{2H}$:
\begin{equation}
\begin{aligned}
r_g(G)=\Big[&\frac{1}{|V|}\sum_{v\in V}h_v^{(L)};\ \max_{v\in V}h_v^{(L)}\Big].
\end{aligned}
\label{eq:readout}
\end{equation}
The graph-side global sketch $\xi(G) \in \mathbb{R}^{d_{\xi}}$ is intentionally lightweight and spectral rather than a duplicate of all GraphDP fields. Let
$
L_{\mathrm{sym}}=I-D^{-1/2}A_{\mathrm{sym}}D^{-1/2}
\label{eq:laplacian}
$
be the symmetric normalized Laplacian of the symmetrized graph. We define
\begin{equation}
\begin{aligned}
\xi(G)=\big[&\lambda_2,\ldots,\lambda_{r+1},\\
&Q_{0.1}(\{\rho_j\}_{j=1}^{q}),
Q_{0.5}(\{\rho_j\}_{j=1}^{q}),
Q_{0.9}(\{\rho_j\}_{j=1}^{q})\big].
\end{aligned}
\label{eq:spectral_sketch}
\end{equation}
where $\lambda_2,\ldots,\lambda_{r+1}$ are the first non-trivial small eigenvalues of $L_{\mathrm{sym}}$, $Q_p$ denotes the empirical $p$-quantile, and
$
\rho_j=\frac{u_j^{\top}L_{\mathrm{sym}}u_j}{u_j^{\top}u_j}
\label{eq:rayleigh}
$
are random Rayleigh quotients. With default $r=8$ and $q=16$, the sketch dimension is $d_{\xi}=r+3=11$. And the final graph evidence code is
\begin{equation}
h_g(G)=\big[r_g(G);p_{\xi}(\xi(G))\big]\in\mathbb{R}^{D_g},
\label{eq:graphcode}
\end{equation}
where $p_{\xi}$ is the spectral projection layer and $D_g=2H+d_{\xi}$.

\paragraph{GraphDP text anchors.}
The language view serializes graph-level aggregates of structural evidence into a compact key-value prompt. A GraphDP contains fields such as graph size, density, connected components, degree quantiles and entropy, clustering statistics, triangle and 4-cycle statistics, core-number summaries, and spectral summaries. It is deliberately short and structured; \ouralg does not rely on long chain-of-thought prompting or free-form reasoning.

Let $\operatorname{GraphDP}(G)$ denote this deterministic serialization. The default frozen text encoder is Qwen3-Embedding-0.6B~\citep{qwen3embedding} with $D_{\mathrm{text}}=1024$. Given a fixed instruction $I$ and the GraphDP query, the text encoder outputs
\begin{equation}
e(G)=f_{\mathrm{text}}\bigl(I,\operatorname{GraphDP}(G)\bigr)
\in\mathbb{R}^{D_{\mathrm{text}}}.
\label{eq:textencoder}
\end{equation}
The text encoder is used only as an embedding function, and its parameters are frozen by default.

Qwen3-Embedding supports native Matryoshka Representation Learning (MRL), so \ouralg obtains multi-resolution text anchors by prefix truncation. For slice dimensions $D_1<\cdots<D_S\le D_{\mathrm{text}}$, the $s$-th text slice is
\begin{equation}
e^{(s)}(G)=
\frac{e(G)_{1:D_s}}
{\|e(G)_{1:D_s}\|_2}
\in\mathbb{S}^{D_s-1}.
\label{eq:textslice}
\end{equation}
These frozen unit vectors serve as stable structural-language anchors for graph-side alignment.

\subsection{Graph-Language Alignment}
\label{sec:mrl_alignment}

For each slice $s$, a graph-side affine projection with $W^{(s)}\in\mathbb{R}^{D_s\times D_g}$ and $b^{(s)}\in\mathbb{R}^{D_s}$ maps the graph evidence code into the same spherical coordinate system as the corresponding text slice:
\begin{equation}
z^{(s)}(G)=
\frac{W^{(s)}h_g(G)+b^{(s)}}
{\|W^{(s)}h_g(G)+b^{(s)}\|_2}
\in\mathbb{S}^{D_s-1}.
\label{eq:graphslice}
\end{equation}
Only the graph side is trained. The text slices $e^{(s)}(G)$ remain fixed, which prevents the anchor space from drifting when the number of normal graphs is small.

For a minibatch $\mathcal{B}$ of normal graphs, \ouralg minimizes the multi-slice soft cosine alignment loss
\begin{equation}
\mathcal{L}_{\mathrm{align}}
=
\frac{1}{|\mathcal{B}|S}
\sum_{G\in\mathcal{B}}\sum_{s=1}^{S}
\left(
1-\left\langle z^{(s)}(G),e^{(s)}(G)\right\rangle
\right).
\label{eq:align}
\end{equation}
This objective aligns graph and language views at multiple resolutions: smaller slices emphasize coarse structural signals, while larger slices retain finer GraphDP evidence.

% Our default setting $D_s\in\{64,128,256,512\}$, so some projections are dimension-reducing and the largest slice is overcomplete. 
To avoid a rank-impossible orthogonality target, \ouralg uses a dimension-adaptive Gram regularizer to stabilize angular alignment:
\begin{equation}
\begin{aligned}
\mathcal{R}_{\mathrm{orth}}
=&\frac{1}{S}\sum_{s:D_s\le D_g}
\|W^{(s)}W^{(s)\top}-I_{D_s}\|_F^2\\
&+\frac{1}{S}\sum_{s:D_s>D_g}
\|W^{(s)\top}W^{(s)}-I_{D_g}\|_F^2 .
\end{aligned}
\label{eq:orth}
\end{equation}
For $D_s\le D_g$, this encourages orthonormal rows and stable dimensionality reduction. For $D_s>D_g$, it encourages an approximately isometric linear embedding. The penalty regulates $W^{(s)}$, not the bias, and does not guarantee angular preservation for arbitrary inputs. Unit normalization removes output magnitude; an affine projection with nonzero bias is not generally invariant to rescaling its input.

\subsection{Training-Time Spherical Density Shaping}
\label{sec:prototype_shaping}

In addition to alignment, \ouralg uses spherical prototypes as an optional training-time geometry regularizer. For each slice $s$, let
\[
\mathcal{M}^{(s)}=\{\mu^{(s)}_1,\ldots,\mu^{(s)}_K\},
\qquad
\|\mu^{(s)}_j\|_2=1,
\]
be $K$ normal prototype directions on the unit sphere. With learned positive concentration coefficients $\kappa_j^{(s)}$, the implemented hard-assignment energy is
\begin{equation}
E_{\mathrm{proto}}^{(s)}(G)
=
-\max_{1\le j\le K}
\kappa_j^{(s)}\left\langle z^{(s)}(G),\mu_j^{(s)}\right\rangle.
\label{eq:proto_energy}
\end{equation}
Let $E_{\mathrm{proto}}(G)=\sum_s E_{\mathrm{proto}}^{(s)}(G)$. After alignment warmup, prototype directions are initialized from training normals and updated by EMA of cosine-assigned normal embeddings. For each normal graph $G$ and its synthetic perturbation $G^{-}$, training regularizes normal energy and enforces a relative margin:
\begin{equation}
\begin{aligned}
\mathcal{L}_{\mathrm{proto}}
=&\frac{1}{|\mathcal{B}|}\sum_{G\in\mathcal{B}}
\Big\{\log\cosh\big(E_{\mathrm{proto}}(G)\big)\\
&+\big[m_{\mathrm{proto}}-
\big(E_{\mathrm{proto}}(G^{-})-E_{\mathrm{proto}}(G)\big)\big]_+\Big\}.
\end{aligned}
\label{eq:proto_loss}
\end{equation}
The perturbation indicator is used only inside the training loss and is never inserted into GraphDP. This vMF-inspired energy is a training surrogate, not a normalized density likelihood or the final anomaly scorer. The latter remains the nonparametric reference-set score.
% Prototypes are not the final anomaly detector; they only shape the representation geometry used by the nonparametric reference-set score.
So the graph-side training objective is
\begin{equation}
\mathcal{L}
=
\mathcal{L}_{\mathrm{align}}
+
\lambda_{\mathrm{orth}}\mathcal{R}_{\mathrm{orth}}
+
\lambda_{\mathrm{proto}}\mathcal{L}_{\mathrm{proto}} .
\label{eq:train_obj}
\end{equation}
All text embeddings in Eqs.~\eqref{eq:textencoder}--\eqref{eq:textslice} can be pre-computed or forwarded through the frozen encoder without gradient updates.

\subsection{Spherical Reference-Set Scoring}
\label{sec:reference_scoring}

After training, \ouralg freezes both encoders and scores a query graph by its angular density relative to normal references. Let $\mathcal{R}$ be the normal reference set for the deployment regime. For each slice $s$, define graph and text reference embeddings
\begin{equation}
\begin{aligned}
\mathcal{R}^{(s)}_g&=\{z^{(s)}(G_i):G_i\in\mathcal{R}\},\\
\mathcal{R}^{(s)}_t&=\{e^{(s)}(G_i):G_i\in\mathcal{R}\}.
\end{aligned}
\label{eq:refs}
\end{equation}
A vMF KDE score for any unit embedding $u\in\mathbb{S}^{D_s-1}$ and reference set $\mathcal{A}$ is
\begin{equation}
\ell_\kappa(u;\mathcal{A})
=
-\log
\frac{1}{|\mathcal{A}|}
\sum_{r\in\mathcal{A}}
C_{D_s}(\kappa)\exp\!\left(\kappa\langle u,r\rangle\right),
\label{eq:vmfkde}
\end{equation}
where $\kappa$ is the concentration parameter and $C_{D_s}(\kappa)$ is the vMF normalizing constant. The reported scorer uses the high-concentration reference-set limit of this density view. Let $\mathcal{N}_k(u;\mathcal{A})$ be the $k$ nearest references under cosine distance and define
\begin{equation}
d_k(u,\mathcal{A})
=
\frac{1}{k}
\sum_{r\in\mathcal{N}_k(u;\mathcal{A})}
\left(1-\langle u,r\rangle\right).
\label{eq:knn_distance}
\end{equation}
The final graph-level anomaly score for query $G_q$ is
\begin{equation}
\begin{aligned}
s(G_q)=
\sum_{s=1}^{S}w_s
\Big[
&\alpha_s\,
 d_k\bigl(z^{(s)}(G_q),\mathcal{R}^{(s)}_g\bigr)\\
&+
(1-\alpha_s)\,
 d_k\bigl(e^{(s)}(G_q),\mathcal{R}^{(s)}_t\bigr)
\Big].
\end{aligned}
\label{eq:fixedscore}
\end{equation}
Here $w_s$ are slice weights and $\alpha_s$ balances graph and text channels. In our default implementation, these weights and score calibration are estimated from normal references only; the appendix~\ref{app:reliability} gives the normal-reference reliability rule used in the reported scorer. Single-domain GLAD uses target-domain training normals as $\mathcal{R}$. Zero-shot transfer replaces $\mathcal{R}$ by source-domain normals and uses no target graphs for training. Few-shot calibration adds a small number of trusted target normals to $\mathcal{R}$, while the graph encoder, frozen text encoder, projections, and prototypes remain unchanged.

\begin{table*}[!t]
\centering
\refstepcounter{table}\label{tab:main_results}
\parbox{0.96\textwidth}{\centering\scriptsize
\textsc{Table~\thetable.} Single-domain GLAD results in AUROC (\%). The \ouralg row uses the spherical reference-set scorer. Average AUROC and rank are computed over all twelve datasets. Best, second, and third are shown in \best{bold}, \second{underline}, and a dagger $\dagger$ marker.}\\[4pt]
\begingroup
\renewcommand{\third}[1]{#1\textsuperscript{\ensuremath{\dagger}}}
\resizebox{\textwidth}{!}{%
\setlength{\tabcolsep}{1.2pt}
\begin{tabular}{@{}l|cccccccccccc|cc@{}}
\toprule
Method & MUTAG & PROTEINS & D\&D & ENZYMES & DHFR & BZR & COX2 & AIDS & IMDB-B & NCI1 & COLLAB & REDDIT-B & \shortstack{Avg.\\AUROC} & \shortstack{Avg.\\Rank} \\
\midrule
\multicolumn{15}{l}{\emph{Graph Kernel + Detector}} \\
PK-SVM    & \std{46.06}{0.47} & \std{49.43}{0.69} & \std{47.69}{0.24} & \std{52.45}{0.29} & \std{48.31}{0.47} & \std{46.67}{0.52} & \std{52.15}{0.16} & \std{50.93}{0.19} & \std{51.75}{0.30} & \std{51.39}{0.19} & \std{49.72}{0.60} & \std{48.36}{0.67} & 49.58 & 12.42 \\
PK-iF     & \std{47.98}{0.41} & \std{61.24}{0.34} & \std{75.29}{0.46} & \std{49.82}{0.67} & \std{52.79}{0.35} & \std{59.08}{0.29} & \std{52.48}{0.38} & \std{52.01}{0.53} & \std{52.83}{0.51} & \std{50.22}{0.12} & \std{51.38}{0.20} & \std{46.19}{0.21} & 54.28 & 11.25 \\
WL-SVM    & \std{62.18}{0.29} & \std{53.85}{0.26} & \std{47.98}{0.32} & \std{53.75}{0.34} & \std{50.30}{0.31} & \std{51.16}{0.36} & \std{53.34}{0.27} & \std{52.56}{0.41} & \std{52.98}{0.69} & \std{54.18}{0.67} & \std{54.62}{1.28} & \std{49.50}{0.54} & 53.03 & 10.67 \\
WL-iF     & \std{65.71}{0.38} & \std{65.75}{0.35} & \std{70.49}{0.28} & \std{51.03}{0.42} & \std{51.64}{0.22} & \std{51.71}{0.45} & \std{49.56}{0.11} & \std{61.42}{0.50} & \std{51.79}{0.32} & \std{50.41}{0.31} & \std{51.41}{0.39} & \std{49.84}{0.11} & 55.90 & 11.00 \\
\midrule
\multicolumn{15}{l}{\emph{GNN-based Deep Learning Methods}} \\
OCGIN     & \std{79.55}{0.22} & \std{76.46}{0.13} & \std{79.08}{0.19} & \std{62.44}{0.38} & \std{61.09}{0.27} & \std{69.13}{0.13} & \std{57.81}{0.50} & \std{96.89}{0.20} & \std{61.47}{0.18} & \std{69.46}{0.36} & \std{60.58}{0.27} & \std{82.10}{0.37} & 71.34 & 7.25 \\
GLocalKD  & \std{86.25}{0.57} & \third{\std{77.29}{0.41}} & \best{\std{80.76}{0.50}} & \std{61.75}{0.10} & \std{61.79}{0.54} & \std{68.55}{0.15} & \std{58.93}{0.47} & \std{96.93}{0.34} & \std{53.31}{0.53} & \std{65.29}{0.21} & \std{51.85}{0.18} & \std{80.32}{0.10} & 70.25 & 6.92 \\
OCGTL     & \std{88.02}{0.43} & \std{72.89}{0.57} & \std{77.76}{0.48} & \std{63.59}{0.11} & \std{59.82}{0.44} & \std{51.89}{0.46} & \std{59.81}{0.30} & \best{\std{99.36}{0.67}} & \std{65.27}{0.24} & \second{\std{75.75}{0.47}} & \std{48.13}{0.41} & \second{\std{88.03}{0.22}} & 70.86 & 6.33 \\
SIGNET    & \second{\std{88.84}{0.15}} & \std{75.86}{0.30} & \std{74.53}{0.11} & \std{63.12}{0.52} & \best{\std{72.87}{0.28}} & \second{\std{80.79}{0.38}} & \best{\std{72.35}{0.58}} & \std{97.60}{0.28} & \second{\std{70.12}{0.61}} & \std{74.32}{0.34} & \second{\std{72.45}{0.11}} & \std{85.24}{0.45} & \second{77.34} & 4.25 \\
GLADC     & \std{83.07}{0.29} & \second{\std{77.43}{0.19}} & \std{76.54}{0.25} & \std{63.44}{0.30} & \std{61.25}{0.19} & \std{68.23}{0.31} & \std{64.13}{0.23} & \std{98.02}{0.23} & \std{65.94}{0.26} & \std{68.32}{0.22} & \std{54.32}{0.37} & \std{78.87}{0.56} & 71.63 & 6.83 \\
CVTGAD    & \std{86.64}{0.32} & \std{76.49}{0.29} & \std{78.84}{0.40} & \second{\std{68.56}{0.43}} & \std{63.23}{0.38} & \std{77.69}{0.28} & \std{64.36}{0.16} & \third{\std{99.21}{0.27}} & \third{\std{69.82}{0.13}} & \std{69.13}{0.58} & \third{\std{71.01}{0.58}} & \third{\std{87.43}{0.60}} & 76.03 & 4.33 \\
MUSE      & \std{85.92}{0.28} & \std{76.87}{0.32} & \third{\std{79.23}{0.35}} & \third{\std{67.82}{0.38}} & \second{\std{71.45}{0.33}} & \std{78.34}{0.30} & \third{\std{65.87}{0.29}} & \std{98.95}{0.31} & \std{67.84}{0.27} & \third{\std{74.45}{0.26}} & \std{67.48}{0.36} & \std{85.32}{0.33} & \third{76.63} & \second{3.92} \\
UniFORM   & \third{\std{88.45}{0.24}} & \std{77.15}{0.27} & \std{78.56}{0.31} & \best{\std{69.34}{0.41}} & \third{\std{69.78}{0.36}} & \third{\std{79.85}{0.32}} & \std{65.12}{0.31} & \std{98.52}{0.22} & \std{68.42}{0.29} & \std{72.93}{0.31} & \std{66.23}{0.38} & \std{84.67}{0.35} & 76.58 & \third{4.08} \\
\midrule
\textbf{\ouralg} & \best{\std{91.42}{0.87}} & \best{\std{80.63}{0.42}} & \second{\std{79.41}{0.49}} & \std{66.20}{0.46} & \std{69.38}{0.37} & \best{\std{80.93}{0.38}} & \second{\std{68.70}{0.41}} & \second{\std{99.32}{0.59}} & \best{\std{77.20}{0.35}} & \best{\std{75.86}{0.56}} & \best{\std{80.29}{0.33}} & \best{\std{88.80}{0.69}} & \best{79.85} & \best{1.75} \\
\bottomrule
\end{tabular}}
\endgroup
\end{table*}

\section{Spherical Density Principle}
\begin{proposition}[vMF KDE to nearest-neighbor scoring]
Let $\mathcal{R}\subset \mathbb{S}^{d-1}$ be finite and let $u\in\mathbb{S}^{d-1}$. For the vMF KDE score in Eq.~\eqref{eq:vmfkde}, there exists a constant $a_\kappa$ independent of $u$ such that
\[
\lim_{\kappa\to\infty}
\left(\frac{\ell_\kappa(u;\mathcal{R})}{\kappa}+a_\kappa\right)
=1-\max_{r\in\mathcal{R}}\langle u,r\rangle .
\]
Thus, up to constants and positive scaling, high-concentration vMF scoring ranks queries by one-minus-cosine nearest-neighbor distance on the sphere.
\end{proposition}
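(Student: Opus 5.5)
The plan is to expand the vMF KDE score in Eq.~\eqref{eq:vmfkde} and sandwich the log-sum-exp between its maximal term and the maximal term plus $\log|\mathcal{R}|$. Write $m(u)=\max_{r\in\mathcal{R}}\langle u,r\rangle$ and $N=|\mathcal{R}|$. Pulling out the normalizing constant gives
\[
\ell_\kappa(u;\mathcal{R})=-\log C_d(\kappa)+\log N-\log\sum_{r\in\mathcal{R}}\exp\!\left(\kappa\langle u,r\rangle\right).
\]
The first two terms do not depend on $u$. This suggests choosing
\[
a_\kappa=\frac{\log C_d(\kappa)-\log N}{\kappa}+1,
\]
which is independent of $u$. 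With this choice,
\[
\frac{\ell_\kappa(u;\mathcal{R})}{\kappa}+a_\kappa=1-\frac{1}{\kappa}\log\sum_{r\in\mathcal{R}}e^{\kappa\langle u,r\rangle}.
\]

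The key step is the elementary bound
\[
e^{\kappa m(u)}\le\sum_{r}e^{\kappa\langle u,r\rangle}\le N e^{\kappa m(u)}.
\]
Taking logarithms and dividing by $\kappa>0$ yields
\[
m(u)\le\frac{1}{\kappa}\log\sum_r e^{\kappa\langle u,r\rangle}\le m(u)+\frac{\log N}{\kappa}.
\]
Letting $\kappa\to\infty$ shows the middle term converges to $m(u)$. The convergence is uniform in $u$, and the error is at most $(\log N)/\kappa$. Hence the limit equals $1-m(u)$, the one-minus-cosine nearest-neighbor distance.

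There is no genuine obstacle; the only points needing care are bookkeeping. First, finiteness and nonemptiness of $\mathcal{R}$ are needed so that $N\ge1$ and the maximum is attained. Second, $C_d(\kappa)$ is a positive finite constant depending only on $(d,\kappa)$, which is what makes $a_\kappa$ independent of $u$; we never need its asymptotics, since it is absorbed into $a_\kappa$.

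For the ranking remark, note that for each fixed $\kappa$ the map $u\mapsto\ell_\kappa(u;\mathcal{R})/\kappa+a_\kappa$ is a positive affine transformation of $\ell_\kappa$. It therefore induces the same ordering of queries as $\ell_\kappa$. By the uniform $(\log N)/\kappa$ bound, this ordering agrees with the $1-m(u)$ ordering for large $\kappa$ whenever the nearest-neighbor distances of the queries being compared differ by more than $(\log N)/\kappa$.
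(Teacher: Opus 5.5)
Your proposal is correct and matches the paper's proof: you use the same sandwich $e^{\kappa m(u)}\le\sum_r e^{\kappa\langle u,r\rangle}\le|\mathcal{R}|e^{\kappa m(u)}$, take logarithms, divide by $\kappa$, and absorb $C_d(\kappa)$ and $|\mathcal{R}|$ into a $u$-independent constant. Your explicit choice $a_\kappa=(\log C_d(\kappa)-\log|\mathcal{R}|)/\kappa+1$ and the $(\log|\mathcal{R}|)/\kappa$ margin condition for rankings are somewhat more precise than the paper, which simply notes that the additive constant $1$ does not change rankings.
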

\begin{proof}
Write $s_r=\langle u,r\rangle$ and $m(u)=\max_{r\in\mathcal{R}}s_r$. Since all reference terms are finite,
\[
\exp(\kappa m(u))\le
\sum_{r\in\mathcal{R}}\exp(\kappa s_r)
\le |\mathcal{R}|\exp(\kappa m(u)).
\]
Taking logarithms and dividing by $\kappa$ gives
\[
m(u)\le \frac{1}{\kappa}\log\sum_{r\in\mathcal{R}}\exp(\kappa s_r)
\le m(u)+\frac{\log|\mathcal{R}|}{\kappa}.
\]
Therefore the scaled log-sum-exp converges uniformly to $m(u)$. The remaining terms in Eq.~\eqref{eq:vmfkde}, namely $C_d(\kappa)$ and $|\mathcal{R}|$, are independent of $u$ and only contribute a constant $a_\kappa$ after scaling. Hence the negative log-density is asymptotically equivalent to $-m(u)$, or equivalently to $1-m(u)=\min_{r\in\mathcal{R}}(1-\langle u,r\rangle)$ because the additive constant $1$ does not change rankings.
\end{proof}
The proposition gives the exact 1-NN angular limit. The reported mean $k$-NN score with $k>1$ is a finite-sample robust extension of this limit: it reduces sensitivity to isolated reference points while preserving the same density intuition.

% BEGIN inlined privacy_method.tex
\section{Differentially Private Reference Calibration}
\label{sec:private_method}
The reference set is both the mechanism for target adaptation and a potential disclosure channel. Exact nearest-neighbor scoring retains individual normal embeddings, which a target institution may not be permitted to share. We introduce \textbf{GLASS-PR}, an optional private-reference extension: publicly available source encoders define the coordinates, while a target institution releases a noisy summary of its normal distribution. This changes the inference approximation, not the graph--language architecture.

\subsection{Protected Records and Public Coordinates}
Let $\mathcal R=\{G_i\}_{i=1}^n$ be a private target normal-reference set of fixed public size $n$. Two sets are adjacent if one complete graph record is replaced. A graph and its derived GraphDP, attributes, and all modality/slice embeddings form \emph{one} record. We assume a trusted local data holder, and that the source-trained graph encoder, frozen text encoder, feature map, and hyperparameters are fixed independently of $\mathcal R$. For every measurable output event $A$, the released mechanism must satisfy
\begin{equation}
\Pr[\mathcal M(\mathcal R)\in A]
\le e^\epsilon\Pr[\mathcal M(\mathcal R')\in A]+\delta.
\label{eq:dp_definition}
\end{equation}
This is target-reference graph-record privacy. It does not privatize source training, hide query graphs from a scoring service, or provide user-level privacy when one person contributes multiple graphs. In particular, ordinary target-domain GNN training cannot precede this mechanism while retaining the following sensitivity argument: changing a training record could change every encoded reference. Our private experiments therefore train fresh public-source models without target optimization.

\subsection{A Bounded Joint Spherical Summary}
For unit embeddings $u,r\in\mathbb S^{d-1}$, consider
\begin{equation}
K_\kappa(u,r)=\exp\{\kappa(\langle u,r\rangle-1)\}
=\exp\{-\kappa\|u-r\|_2^2/2\}.
\label{eq:dp_sphere_kernel}
\end{equation}
This bounded kernel differs from the vMF density kernel only by a positive, query-independent factor. Its Gaussian form permits random-feature mean summaries, following established private kernel methods~\citep{harder2021dpmerf,wagner2023privatekde}.

Index channels by $c=(s,m)$, where $m\in\{g,t\}$ and $u_{g,s}=z^{(s)}$, $u_{t,s}=e^{(s)}$. Publicly draw $M$ frequencies $\omega_{cj}\sim\mathcal N(0,\kappa I_{D_s})$ and define paired features
\begin{equation}
\phi_c(u)=\frac{1}{\sqrt M}
\bigoplus_{j=1}^M
\begin{bmatrix}\cos(\omega_{cj}^{\top}u)\\\sin(\omega_{cj}^{\top}u)\end{bmatrix}.
\label{eq:dp_rff}
\end{equation}
Then $\|\phi_c(u)\|_2=1$ exactly, and its expected inner product is $K_\kappa(u,r)$. Fix nonnegative public channel weights $a_{g,s}=w_s\alpha_s$, $a_{t,s}=w_s(1-\alpha_s)$ with $\sum_c a_c=1$. The joint feature and empirical mean are
\begin{equation}
\Phi(G)=\bigoplus_c\sqrt{a_c}\phi_c(u_c(G)),\qquad
\mu_{\mathcal R}=\frac1n\sum_{i=1}^n\Phi(G_i).
\label{eq:dp_joint}
\end{equation}
The weighted concatenation has norm one, so all modalities and slices are covered by one sensitivity bound. They cannot be treated as disjoint private datasets merely because their encoders differ. The private extension fixes $\alpha_s=1/2$ and uniform slice weights; the nonprivate reliability estimates in the appendix are \emph{not} reused without privacy accounting.

\begin{proposition}[Private reference summary]
\label{prop:private_summary}
Under the public-coordinate assumptions above, the replacement $\ell_2$ sensitivity of $\mu_{\mathcal R}$ is at most $2/n$. Consequently,
\begin{equation}
\widetilde\mu_{\mathcal R}=\mu_{\mathcal R}+\eta,\qquad
\eta\sim\mathcal N(0,\sigma_{\rm DP}^2 I),
\label{eq:dp_release}
\end{equation}
with analytically calibrated Gaussian noise is $(\epsilon,\delta)$-DP. Scores computed solely from this release and public information inherit that guarantee.
\end{proposition}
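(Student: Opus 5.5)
The argument runs in three steps: bound the norm of each record's joint feature, deduce the replacement sensitivity of the mean, then apply the analytic Gaussian mechanism followed by post-processing.

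\textbf{Step 1: each joint feature has unit norm.} For any $u$, the vector $\phi_c(u)$ in Eq.~\eqref{eq:dp_rff} stacks $M$ pairs $(\cos\theta_j,\sin\theta_j)/\sqrt M$. Each pair has squared norm $1/M$, so $\|\phi_c(u)\|_2^2=\sum_{j=1}^M 1/M=1$, whatever the value of $u$. Because the channel blocks in Eq.~\eqref{eq:dp_joint} are concatenated,
\[
\|\Phi(G)\|_2^2=\sum_c a_c\|\phi_c(u_c(G))\|_2^2=\sum_c a_c=1 .
\]
This uses that the weights $a_c$ are nonnegative, public and sum to one, which holds because $\alpha_s=1/2$ and the slice weights are uniform and fixed.

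\textbf{Step 2: replacement sensitivity is at most $2/n$.} Let $\mathcal R,\mathcal R'$ be adjacent, differing only in record $G_i$ versus $G_i'$. The public-coordinate assumption is the essential ingredient here. The encoders, the frequencies $\omega_{cj}$, the weights $a_c$ and the size $n$ are all fixed independently of $\mathcal R$, so $\Phi$ is one fixed deterministic map applied record by record. Hence every term $\Phi(G_j)$ with $j\neq i$ is unchanged, and
\[
\|\mu_{\mathcal R}-\mu_{\mathcal R'}\|_2=\tfrac1n\|\Phi(G_i)-\Phi(G_i')\|_2\le \tfrac1n\bigl(\|\Phi(G_i)\|_2+\|\Phi(G_i')\|_2\bigr)=\tfrac2n .
\]
The replaced record brings along all of its modality and slice embeddings, but they are all contained in the single vector $\Phi(G_i)$. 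One norm bound therefore covers every channel at once, and no composition across channels is needed.

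\textbf{Step 3: privacy and post-processing.} Invoke the analytic Gaussian mechanism of \citep{balle2018analytic} with $\ell_2$ sensitivity $\Delta=2/n$. It supplies a $\sigma_{\rm DP}$ that makes $\mu_{\mathcal R}+\eta$ satisfy Eq.~\eqref{eq:dp_definition}. For the last claim, any score of the form $q\mapsto f(q,\widetilde\mu_{\mathcal R},\text{public})$, for example $\langle\Phi(G_q),\widetilde\mu_{\mathcal R}\rangle$, is a randomized or deterministic function of the release alone. The post-processing property of DP then gives the same $(\epsilon,\delta)$ guarantee.

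I do not expect a hard calculation. The main obstacle is stating Step 2 rigorously: the bound requires that nothing in $\Phi$ depends on $\mathcal R$. If any component did, such as a trained encoder, data-adapted weights or a calibrated $\kappa$, then changing one record could shift all $n$ features and the $2/n$ bound would fail. The proof should therefore state explicitly that $\Phi$ is measurable with respect to public randomness only.
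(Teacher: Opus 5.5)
Your proposal is correct and follows the paper's proof. The paper likewise bounds the change in the mean by $\|\Phi(G_i)-\Phi(G_i')\|_2/n\le 2/n$ via the triangle inequality, using the unit norm of the joint feature established just before the proposition. It then invokes analytic Gaussian calibration and post-processing, noting that the argument covers one joint release rather than separate per-channel releases; your explicit Step~1 and your insistence that $\Phi$ be fixed independently of $\mathcal R$ spell out what the paper places in the surrounding text and its public-coordinate assumptions.
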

\begin{proof}
Replacing $G_i$ by $G_i'$ changes the mean by
$[\Phi(G_i)-\Phi(G_i')]/n$, whose norm is at most $2/n$ by the triangle inequality. Gaussian calibration~\citep{balle2018analytic} gives the stated release guarantee; postprocessing does not increase its privacy budget. This argument applies to the joint release, not separate full-budget releases for every channel.
\end{proof}

\subsection{Private Scoring and Source Shrinkage}
We rank queries using negative approximate kernel density,
\begin{equation}
s_{\rm PR}(G)=-\langle\Phi(G),\widetilde\mu_{\mathcal R}\rangle.
\label{eq:dp_score}
\end{equation}
Random features and Gaussian noise can produce negative density estimates, so we do not take their logarithm or claim that the signed estimate is a normalized probability density. Equation~\eqref{eq:dp_score} approximates a fixed mixture of bounded kernel similarities. It is distinct from exact mean $k$-NN and from averaging channel log-densities. We report these alternatives separately rather than selecting one using target test labels.

For small reference sets, a public-source mean supplies a shrinkage prior:
\begin{equation}
\widetilde\mu_{\rm mix}=(1-\lambda)\mu_{\rm src}
+\lambda\widetilde\mu_{\mathcal R},\qquad
\lambda=\frac{n}{n+n_0}.
\label{eq:dp_mix}
\end{equation}
Here $n_0$ is fixed publicly. This is postprocessing of the private summary, and source-only scoring ($\lambda=0$) is independent of the target reference set. Shrinkage reduces noise but retains source bias; it is not guaranteed to help under semantic shift. All score normalization, private-target bandwidth fitting, and private-target threshold estimation are excluded from this mechanism. Further private accesses would require additional accounting.

\paragraph{Resolution and distributed release.}
The sensitivity $2/n$ is independent of slice dimension. Although the summary noise norm grows with its dimension, a fixed unit-feature query has noise variance $\sigma_{\rm DP}^2$. Therefore smaller MRL slices do not automatically imply a stronger privacy guarantee. Separately, $J$ fixed disjoint institutions can each release a local mean and aggregate them with public size weights. For equal privacy parameters, independent local Gaussian noise gives aggregate standard deviation $\sqrt J$ times that of one trusted centralized release with the same total $n$. We simulate this noise cost, not an encrypted networking protocol or federated large-model training.

% END inlined privacy_method.tex

\section{Nonprivate Experiments}
\subsection{Setup}
We evaluate 12 widely used GLAD benchmarks~\citep{morris2020tudataset}: MUTAG, DHFR, BZR, COX2, AIDS, and NCI1 for small molecules; PROTEINS, D\&D, and ENZYMES for proteins; and IMDB-BINARY, COLLAB, and REDDIT-BINARY for social graphs. Training uses normal-class graphs only, and testing ranks held-out normal graphs against anomalous-class graphs. We report mean and standard deviation over five seeds, with AUROC as the primary metric. Appendix Table~\ref{tab:dataset_stats} summarizes graph counts, classes, average sizes, node-feature availability, and domain groups.

The baseline suite includes four graph-kernel detectors (PK-SVM, PK-iF, WL-SVM, WL-iF) and eight GNN-based methods (OCGIN, GLocalKD, OCGTL, SIGNET, GLADC, CVTGAD, MUSE, UniFORM). The main \ouralg row uses the spherical reference-set scorer in Eq.~\eqref{eq:fixedscore} across all datasets. Unless otherwise specified, the reported configuration uses a frozen Qwen3-Embedding-0.6B text encoder and native MRL slices $\{64,128,256,512\}$; larger frozen encoders and text adaptation are evaluated as ablations.

The text input is deterministic and label-free. A typical query is:
\begin{quote}\scriptsize\ttfamily\raggedright
Instruct: Encode the graph description for graph-level anomaly detection.\\
Query: domain=mol; phase=single; nodes=\{...\}; edges=\{...\}; density=\{...\}; degree\_q=\{...\}; clustering=\{...\}; motifs=triangles/4cycles; core=\{...\}; spectral=\{...\}.
\end{quote}
The domain and phase fields describe the deployment episode, while normal/anomaly status is never serialized.

\subsection{Single-Domain Results}
Table~\ref{tab:main_results} reports the full single-domain comparison against established graph-kernel and GNN baselines. \ouralg obtains the best average AUROC (79.85) and the best average rank (1.75) among the listed baselines. It ranks first on seven datasets (MUTAG, PROTEINS, BZR, IMDB-B, NCI1, COLLAB, and REDDIT-B) and remains top-two on D\&D, COX2, and AIDS. This result profile is consistent with the intended role of the graph-language bridge: graph neighborhoods preserve strong molecular and social structural signals, while frozen text anchors stabilize weakly attributed or structurally transferable datasets.

The compact 0.6B encoder provides a strong accuracy--capacity balance across the benchmark suite (Table~\ref{tab:scale}). Fine-grained biochemical tasks remain more sensitive to structural coverage; Appendix~\ref{app:structural_coverage} discusses these cases and the encoder-scaling trade-off.

\begin{table}[!t]
\centering
\caption{Text-encoder scaling (0.6B, 4B, 8B of Qwen3-Embedding Model) ablation in single-domain AUROC (\%) across all 12 datasets. The 0.6B column is the default setting of \ouralg.}
\label{tab:scale}
\resizebox{0.75\columnwidth}{!}{%
\begin{tabular}{@{}lccc@{}}
\toprule
Dataset & 0.6B & 4B & 8B \\
\midrule
MUTAG & \best{91.42$\pm$0.87} & 90.98$\pm$1.35 & 90.96$\pm$1.06 \\
PROTEINS & \best{80.63$\pm$0.42} & 80.10$\pm$0.76 & 79.46$\pm$0.81 \\
D\&D & 79.41$\pm$0.49 & \best{80.22$\pm$1.25} & 78.62$\pm$1.28 \\
ENZYMES & 66.20$\pm$0.46 & \best{70.51$\pm$0.25} & 68.42$\pm$0.27 \\
DHFR & \best{69.38$\pm$0.37} & 69.03$\pm$0.30 & 68.25$\pm$0.27 \\
BZR & \best{80.93$\pm$0.38} & 79.77$\pm$0.10 & 80.23$\pm$0.18 \\
COX2 & \best{68.70$\pm$0.41} & 67.30$\pm$0.31 & 68.18$\pm$0.38 \\
AIDS & 99.32$\pm$0.59 & 99.38$\pm$0.43 & \best{99.41$\pm$0.01} \\
IMDB-B & \best{77.20$\pm$0.35} & 76.84$\pm$0.38 & 75.15$\pm$0.39 \\
NCI1 & 75.86$\pm$0.56 & 75.63$\pm$0.27 & \best{76.49$\pm$0.27} \\
COLLAB & 80.29$\pm$0.33 & 79.16$\pm$0.33 & \best{80.42$\pm$0.28} \\
REDDIT-B & \best{88.80$\pm$0.69} & 88.09$\pm$0.74 & 87.38$\pm$0.36 \\
\bottomrule
\end{tabular}}
\end{table}

\subsection{Cross-Domain and Few-Shot Transfer}
Table~\ref{tab:cross} evaluates directional transfer by changing the normal reference set while keeping the scoring rule fixed. The strongest regime is molecular-to-protein transfer: source molecular references improve PROTEINS by +2.9pp and D\&D by +4.6pp over their matched single-domain references, indicating that GraphDP fields such as motifs, degree profiles, density, and low-frequency connectivity can carry compatible substructure semantics across these domains. Protein-to-molecule and social-to-molecule transfer are also stable on AIDS and MUTAG, where the target normal geometry is well covered by the source references. In contrast, transfer into REDDIT-BINARY loses 15--20pp, confirming that social interaction graphs can assign different anomaly meaning to the same structural statistics. All-domain references improve PROTEINS slightly but dilute MUTAG and IMDB-B, suggesting that source coverage and source weighting matter in broad heterogeneous mixtures.

\begin{table}[t]
\centering
\caption{Zero-shot cross-domain AUROC (\%). Target datasets marked $\dagger$ are scored without target-domain training.}
\label{tab:cross}
\resizebox{\columnwidth}{!}{%
\begin{tabular}{@{}llccc@{}}
\toprule
Source $\to$ Target & Dataset & Transfer & Single & $\Delta$ \\
\midrule
\multirow{2}{*}{Mol $\to$ Social}
 & IMDB-B$^\dagger$   & \std{74.65}{0.38} & \std{77.20}{0.35} & {$-$2.6} \\
 & REDDIT-B$^\dagger$ & \std{69.34}{1.61} & \std{88.80}{0.69} & {$-$19.5} \\
\midrule
\multirow{2}{*}{Social $\to$ Mol}
 & MUTAG$^\dagger$ & \std{90.95}{0.77} & \std{91.42}{0.87} & {$-$0.5} \\
 & AIDS$^\dagger$  & \best{\std{99.38}{0.30}} & \std{99.32}{0.59} & \best{{+0.06}} \\
\midrule
\multirow{3}{*}{Mol $\to$ Protein}
 & PROTEINS$^\dagger$ & \best{\std{83.55}{0.30}} & \std{80.63}{0.42} & \best{{+2.9}} \\
 & D\&D$^\dagger$     & \best{\std{83.99}{1.08}} & \std{79.41}{0.49} & \best{{+4.6}} \\
 & ENZYMES$^\dagger$  & \std{62.84}{0.21} & \std{66.20}{0.46} & {$-$3.4} \\
\midrule
\multirow{2}{*}{Protein $\to$ Mol}
 & MUTAG$^\dagger$ & \best{\std{91.81}{1.13}} & \std{91.42}{0.87} & \best{+0.4} \\
 & AIDS$^\dagger$  & \best{\std{99.38}{0.30}} & \std{99.32}{0.59} & \best{{+0.06}} \\
\midrule
\multirow{2}{*}{Protein $\to$ Social}
 & IMDB-B$^\dagger$   & \std{73.62}{0.48} & \std{77.20}{0.35} & {$-$3.6} \\
 & REDDIT-B$^\dagger$ & \std{73.43}{1.67} & \std{88.80}{0.69} & {$-$15.4} \\
\midrule
\multirow{3}{*}{All-domain}
 & PROTEINS & \best{\std{81.35}{0.29}} & \std{80.63}{0.42} & \best{+0.7} \\
 & IMDB-B   & \std{73.61}{0.49} & \std{77.20}{0.35} & {$-$3.6} \\
 & MUTAG    & \std{89.06}{0.54} & \std{91.42}{0.87} & {$-$2.4} \\
\bottomrule
\end{tabular}}
\end{table}

Few-shot calibration uses target normal graphs only as additional references. Table~\ref{tab:fewshot} shows that PROTEINS is the clearest reference-calibration case: one trusted target normal graph improves AUROC from 65.59 to 73.33, and 16--32 target normals bring the score within about one point of the single-domain reference without gradient updates. MUTAG improves more gradually after the first few references, reaching 84.54 at 32 shots, which indicates that a single normal graph may be unrepresentative for small molecular datasets. IMDB-BINARY remains difficult in the few-shot regime; its full target-reference row improves substantially, but small reference sets do not cover the broader social-graph manifold.

\begin{table}[t]
\centering
\caption{Few-shot target calibration (AUROC \%). Parameters are frozen; $k$ target normal graphs are used only as references.}
\label{tab:fewshot}
\resizebox{0.8\columnwidth}{!}{%
\begin{tabular}{@{}lccc@{}}
\toprule
$k$ & MUTAG$^\dagger$ & PROTEINS$^\dagger$ & IMDB-B$^\dagger$ \\
\midrule
0  & 74.35$\pm$3.68 & 65.59$\pm$3.56 & 63.47$\pm$3.68 \\
1  & 68.15$\pm$3.03 & 73.33$\pm$3.04 & 58.97$\pm$2.93 \\
4  & 68.96$\pm$1.88 & 74.95$\pm$2.01 & 62.09$\pm$2.47 \\
8  & 77.12$\pm$1.47 & 78.24$\pm$1.37 & 62.82$\pm$1.60 \\
16 & 80.05$\pm$1.39 & 79.69$\pm$1.11 & 63.50$\pm$1.61 \\
32 & 84.54$\pm$1.18 & 79.81$\pm$0.93 & 63.99$\pm$1.34 \\
Full & \second{88.90$\pm$1.13} & \best{81.95$\pm$0.73} & \second{73.94$\pm$0.56} \\
Single-domain & \best{91.42$\pm$0.87} & \second{80.63$\pm$0.42} & \best{77.20$\pm$0.35} \\
\bottomrule
\end{tabular}}
\end{table}

\subsection{Ablation Analysis}
The ablations clarify where the gains come from. Appendix Table~\ref{tab:scoring} shows that no single channel dominates all datasets. Text-only neighborhoods are already strong on AIDS, graph neighborhoods dominate MUTAG and DHFR, and the full scorer is strongest on PROTEINS, MUTAG, and IMDB-B among the representative targets. Finite-\(\kappa\) vMF and SMS variants provide the density interpretation, but their performance is not uniformly better than angular reference-set scoring; this supports using the high-concentration k-NN limit as the reported detector. The text-encoder scaling ablation in Table~\ref{tab:scale} shows a specificity-transferability trade-off rather than monotone improvement with model size. Finally, the LoRA ablation in Appendix Table~\ref{tab:lora} shows that adapting the text encoder is highly dataset-dependent, with large degradations on PROTEINS, AIDS, IMDB-BINARY, and COLLAB; this supports freezing the instruction-aware text space and learning graph-side alignment only.

\subsection{Discussion: What Transfers?}
The results identify four empirical regimes. First, related biochemical domains transfer well when the source references cover target substructure semantics; this explains the Mol\(\to\)Protein gains on PROTEINS and D\&D. Second, some molecular targets are already well covered by protein or social references, leading to near-lossless transfer on MUTAG and AIDS. Third, few-shot calibration is effective when the target normal manifold is related but locally miscalibrated, as seen on PROTEINS. Fourth, social targets, especially REDDIT-BINARY, remain hard under biological sources because community and interaction semantics differ from biochemical substructures. These regimes are summarized in Appendix Table~\ref{tab:regimes}.

This interpretation also clarifies the role of training-time geometry. Prototypes encourage compact normal modes during training, but the final detector remains a nonparametric reference-set score. Frozen text anchors keep the structural-language coordinate system stable when the reference set changes.

\FloatBarrier
\FloatBarrier
% BEGIN inlined privacy_experiments.tex
\section{Private-Reference Experiments}
\label{sec:private_experiments}
\subsection{Independent Protocol and Scope}
We conduct a separate, fully rerun experiment series to assess the private-reference mechanism, rather than applying noise to a previously selected test score. Two public source pools contain molecular normals (AIDS, MUTAG, NCI1) or protein normals (PROTEINS, D\&D, ENZYMES). For each pool, four models are trained from scratch with seeds 42--45 for exactly 150 epochs. Target graphs and target metrics are not accessed during training or checkpoint selection. Each finalized model is evaluated on the nine datasets outside its source pool, giving 18 source--target dataset settings and 72 model--target episodes covering all twelve benchmarks. These source models, their fixed input padding, and their selection policy are distinct from the preceding nonprivate series; comparisons in this section use their own matched controls.

The graph backbone is a three-layer GIN with mean/max pooling, the 11-dimensional spectral sketch, and native text slices $\{64,128,256,512\}$. Text anchors are content-checked cached embeddings of the frozen 0.6B encoder. No text fine-tuning is performed. The main private summary uses slices $\{64,128\}$, $M=128$ frequencies per channel, uniform joint weights, and $\kappa=4$. Thus $P=1024$ float32 coordinates require 4\,KiB per released summary, excluding the shared encoders. All parameters are fixed across targets. We evaluate $\epsilon\in\{1,2,4,8\}$ with $\delta=10^{-5}$ and ten noise draws per condition; nonprivate summaries are separate controls, not infinite-budget candidates for selection.

Target references are trusted normals from the 80\% normal training pool; test normals and anomalous-class graphs remain disjoint. We evaluate $n\in\{1,4,8,16,32,64,128,256\}$ when available. The primary comparison uses $n=128$, except MUTAG ($n=100$). Ordered-edge/attribute hashes remove exact matches to source training graphs before target evaluation; this check does not establish full graph-isomorphism disjointness. The published TU data are used as privacy simulations, not as clinical patient records. All target labels are used only to define benchmark splits and compute final metrics, never to select encoders, kernels, modalities, or noise draws.

% BEGIN inlined private_results/main_table.tex
\begin{table*}[t]
\centering
\footnotesize
\setlength{\tabcolsep}{3pt}
\renewcommand{\arraystretch}{1.08}
\caption{Private target-reference calibration, AUROC (\%). All columns use independently retrained public-source encoders. NN denotes angular mean 5-NN; Exact KDE and Sketch use $\kappa=4$. Private columns use the same target-only sketch, $\delta=10^{-5}$, and one total graph--text budget. Values are mean $\pm$ sample standard deviation over four model/split seeds, after averaging ten noise draws per private condition. No checkpoint or scorer is selected using target labels.}
\label{tab:dp_main}
\resizebox{\textwidth}{!}{%
\begin{tabular}{@{}llrccccccc@{}}
\toprule
Source & Target & $n$ & Source NN & Target NN & Exact KDE & Sketch & $\epsilon=1$ & $\epsilon=4$ & $\epsilon=8$ \\
\midrule
Mol & PROTEINS & 128 & $60.55\!\pm\!5.18$ & $72.34\!\pm\!1.85$ & $74.20\!\pm\!1.83$ & $74.03\!\pm\!1.93$ & $70.86\!\pm\!2.34$ & $73.78\!\pm\!1.87$ & $73.89\!\pm\!2.00$ \\
Mol & D\&D & 128 & $59.54\!\pm\!6.19$ & $71.74\!\pm\!2.48$ & $66.67\!\pm\!3.46$ & $66.86\!\pm\!3.27$ & $64.04\!\pm\!2.45$ & $66.82\!\pm\!3.34$ & $66.78\!\pm\!3.43$ \\
Protein & AIDS & 128 & $29.69\!\pm\!14.76$ & $98.74\!\pm\!0.96$ & $89.75\!\pm\!5.83$ & $88.41\!\pm\!5.65$ & $82.31\!\pm\!4.50$ & $88.67\!\pm\!4.60$ & $88.32\!\pm\!5.54$ \\
Mol & IMDB-B & 128 & $47.55\!\pm\!5.76$ & $57.85\!\pm\!2.87$ & $59.91\!\pm\!4.88$ & $59.93\!\pm\!4.89$ & $57.95\!\pm\!3.21$ & $59.70\!\pm\!4.54$ & $59.89\!\pm\!4.91$ \\
\bottomrule
\end{tabular}}
\end{table*}

% END inlined private_results/main_table.tex

\subsection{Private Calibration Utility}
Table~\ref{tab:dp_main} evaluates private calibration against matched reference scorers. At $\epsilon=4$, private target-only AUROC is 73.78 on Mol$\to$PROTEINS, 66.82 on Mol$\to$D\&D, 88.67 on Protein$\to$AIDS, and 59.70 on Mol$\to$IMDB-B. Their matched nonprivate summaries obtain 74.03, 66.86, 88.41, and 59.93, respectively. These results show that a compact, once-released summary can retain useful target-normal information without exposing individual reference embeddings or updating either encoder.

Across all 18 settings, adding $\epsilon=4$ noise changes the macro-average AUROC from 60.23 for the identical nonprivate summary to 60.08, a 0.15-point difference. This measures the added privacy cost, not equivalence to the original exact 5-NN detector: finite-bandwidth and summary approximation introduce a separate cost. The complete comparison and difficult transfer settings are analyzed in Appendix~\ref{app:private_failures} and Table~\ref{tab:dp_all}.

\begin{figure*}[t]
\centering
\includegraphics[width=\textwidth]{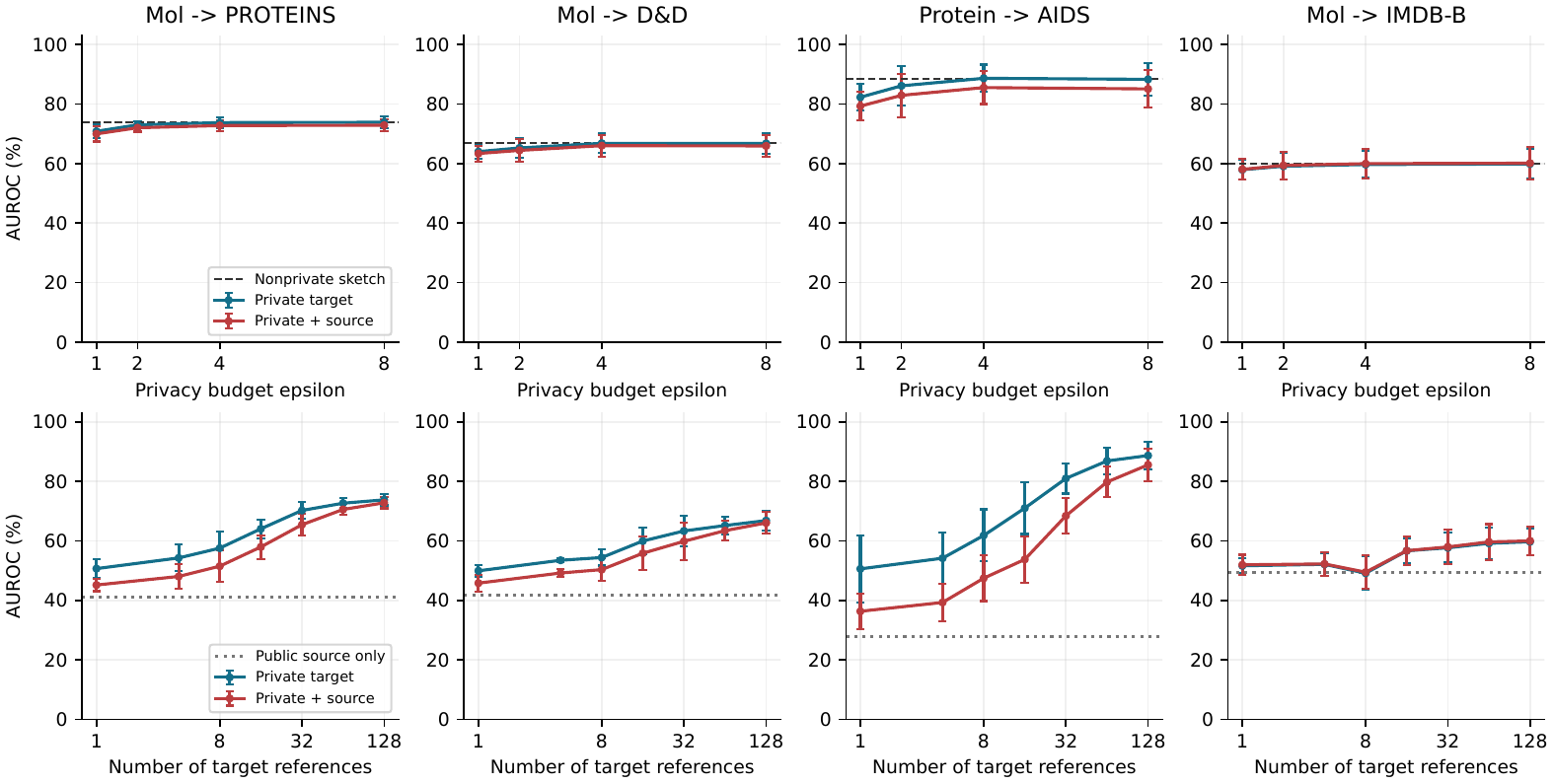}
\caption{Private reference calibration with independently trained public-source models. Top: privacy-budget sweep at $n=128$; the dashed line is the identical nonprivate target summary. Bottom: reference-size sweep at $\epsilon=4$; the dotted line uses only the public source summary. Source shrinkage fixes $\lambda=n/(n+32)$. Error bars are sample standard deviations across four model/split seeds after averaging ten noise draws, not forty independent training runs.}
\label{fig:private_tradeoffs}
\end{figure*}

\subsection{Reference Coverage, Modalities, and Public Priors}
Figure~\ref{fig:private_tradeoffs} illustrates how privacy budget and reference coverage support calibration. At $n=128$, increasing $\epsilon$ from 1 to 4 raises Mol$\to$PROTEINS AUROC from 70.86 to 73.78 and Protein$\to$AIDS from 82.31 to 88.67. Increasing the number of trusted references also improves calibration without gradient updates: it both expands normal coverage and reduces the $2/n$ sensitivity of the summary. The small-reference regime and its variability are discussed in Appendix~\ref{app:private_failures}.

% BEGIN inlined private_results/modalities.tex
\begin{table}[t]
\centering\scriptsize
\setlength{\tabcolsep}{3pt}
\caption{Private modality and source-prior diagnostics at $n=128$, $\epsilon=4$. AUROC averaged over four seeds and ten draws. All methods use the same total privacy budget.}
\label{tab:dp_modalities}
\begin{tabular}{@{}lrrrr@{}}
\toprule
Target & Graph & Text & Joint & Joint+source \\
\midrule
PROTEINS & 71.57 & 53.71 & 73.78 & 72.79 \\
D\&D & 65.79 & 54.54 & 66.82 & 66.03 \\
AIDS & 90.92 & 79.51 & 88.67 & 85.55 \\
IMDB-B & 59.05 & 57.17 & 59.70 & 59.99 \\
\bottomrule
\end{tabular}
\end{table}

% END inlined private_results/modalities.tex

The joint representation remains useful under a single privacy budget (Table~\ref{tab:dp_modalities}). On Mol$\to$PROTEINS, the joint summary obtains 73.78 versus 71.57 for graph-only and 53.71 for text-only, and it also improves over either single modality on Mol$\to$D\&D. All channels of a graph share the same budget through joint normalization. We retain one fixed mixture across targets; dataset-specific modality behavior is documented in Appendix~\ref{app:private_failures} rather than used for test-label-based selection.

Public-source mixing provides a complementary, privacy-preserving calibration option because it is postprocessing of the released target summary. With the fixed shrinkage rule, Mol$\to$IMDB-B reaches 59.99 at $n=128$, compared with 59.70 for target-only calibration. Its utility depends on source compatibility; Figure~\ref{fig:private_tradeoffs} reports both variants, and Appendix~\ref{app:private_failures} examines cases where retaining the source prior is unhelpful.

\subsection{Resolution, Distribution, and Cost}
Additional diagnostics vary slice resolution, feature count, and concentration while keeping the main configuration fixed (Appendix Table~\ref{tab:dp_resolution}). They test whether the useful geometry survives compression, not whether lower input dimension automatically strengthens differential privacy. Appendix Table~\ref{tab:dp_extra} reports AUPRC, FPR95, and the utility cost of independent two- and four-institution releases at fixed total reference size. The latter is an exact Gaussian aggregate-noise simulation under fixed disjoint site membership; it does not implement secure aggregation, network communication, or private federated model training.

The full series contains 210,852 scoring records, including matched nonprivate controls and all diagnostic settings. Existing structural and frozen-text caches make repeated evaluation inexpensive: summed training-loop device time is 1,205.6 seconds across eight source models and summed scoring-loop device time is 626.5 seconds across eight workers. Their sum is 0.51 GPU-hours on RTX 5090 hardware, excluding historical cache construction, model/data loading, and embedding export. This is measured worker time, not a claim about cold-start end-to-end runtime. The computation confirms that privacy--utility evaluation can be extensive without retraining the text model.

% END inlined privacy_experiments.tex

\FloatBarrier
\section{Conclusion}
We presented \ouralg, a framework for transferable graph-level anomaly detection through graph--language alignment and spherical reference-set scoring. GraphDP connects graph structure to a shared textual vocabulary, while frozen instruction-aware anchors and native Matryoshka slices provide coordinates for multimodal alignment. The resulting vMF-inspired density view unifies single-domain detection, zero-shot transfer, and few-shot reference calibration without changing the core architecture. Experiments across twelve benchmarks demonstrate strong single-domain performance and the utility of reference-based adaptation without target-domain gradient updates. Building on this representation, the private-reference extension supports graph-record differential privacy through a bounded joint graph--text kernel summary, with encoders fixed independently of private target references. Its evaluation quantifies privacy--utility trade-offs while distinguishing privacy noise from density approximation. Future work will enrich fine-grained structural vocabularies and improve source weighting and private density summaries under larger domain shifts.

\section*{Acknowledgments}
This work was supported in part by the National Natural Science Foundation of China under Grant No.62376236, the General Program of the Natural Science Foundation of Guangdong Province under Grant No.2024A1515011771, Shenzhen Science and Technology Program ZDSYS20230626091302006 (Shenzhen Key Lab of Multi-Modal Cognitive Computing), and the Shenzhen Stability Science Program 2023.

\FloatBarrier
% Bibliography inlined from the verified IEEEtran-generated main.bbl.
% Generated by IEEEtran.bst, version: 1.12 (2007/01/11)

\newpage
\appendices
\section{Dataset Details for \ouralg Benchmark}
In Table~\ref{tab:dataset_stats}, we give detailed descriptions of the 12 graph datasets used in our study. The datasets span \textit{small-molecule graphs}, \textit{protein graphs}, and \textit{social networks}. For each dataset we describe the domain background, the classes used for anomaly detection, and the available node features. We report node features assuming \texttt{use\_node\_attr=True} when loading from TUDataset, which includes both discrete node labels and continuous attributes when available.

\begin{table*}[t]
\centering
\caption{Statistics of the datasets  \citep{morris2020tudataset} used in our experiments. Each graph-level dataset is characterized by the number of graphs, classes, average number of nodes and edges, and node feature dimensions. The ``Node Labels'' column indicates discrete node types (e.g., atom or amino-acid types), ``Node Attributes'' indicates additional continuous attributes, and ``Node Features'' shows the final dimensionality when loading with \texttt{use\_node\_attr=True}. Values in parentheses show dimensions without attribute loading.}
\label{tab:dataset_stats}
\resizebox{0.8\textwidth}{!}{%
\begin{tabular}{@{}lcccccccc@{}}
\toprule
\textbf{Dataset} & \textbf{\# Graphs} & \textbf{\# Classes} & \textbf{Avg.\ Nodes} & \textbf{Avg.\ Edges} & \textbf{Node Labels} & \textbf{Node Attributes} & \textbf{Node Features} & \textbf{Domain} \\ 
 & & & & & & & \textbf{(w/o attributes)} & \\
\midrule
\textsc{PROTEINS}   & 1\,113 & 2 & 39.1  & 72.8     & 3 SSE types      & 1         & 4 (3)      & Protein \\
\textsc{D\&D}       & 1\,178 & 2 & 284.3 & 715.7    & 89 AA types      & --        & 89 (89)    & Protein \\
\textsc{ENZYMES}    & 600    & 6 & 32.6  & 62.1     & 3 SSE types      & 18        & 21 (3)     & Protein \\
\textsc{MUTAG}      & 188    & 2 & 17.9  & 19.8     & 7 atom types     & --        & 7 (7)      & Molecule \\
\textsc{DHFR}       & 756    & 2 & 42.4  & 44.5     & Atom types       & 3         & 56 (53)    & Molecule \\
\textsc{BZR}        & 405    & 2 & 35.8  & 38.4     & Atom types       & 3         & 56 (53)    & Molecule \\
\textsc{COX2}       & 467    & 2 & 41.2  & 43.5     & Atom types       & 3         & 38 (35)    & Molecule \\
\textsc{AIDS}       & 2\,000 & 2 & 25.7  & 27.9     & Atom types       & 4         & 42 (38)    & Molecule \\
\textsc{NCI1}       & 4\,110 & 2 & 29.8  & 32.3     & 37 atom types    & --        & 37 (37)    & Molecule \\
\textsc{IMDB-B}     & 1\,000 & 2 & 19.8  & 96.5     & --               & --        & 0 (0)      & Social network \\
\textsc{COLLAB}     & 5\,000 & 3 & 74.5  & 2\,457.8 & --               & --        & 0 (0)      & Social network \\
\textsc{REDDIT-B}   & 2\,000 & 2 & 429.6 & 497.8    & --               & --        & 0 (0)      & Social network \\
\bottomrule
\end{tabular}%
}
\end{table*}

\paragraph{Node Feature Implementation Notes}
\begin{itemize}
    \item The \texttt{use\_node\_attr=True} flag in PyTorch Geometric's TUDataset loader determines whether continuous node attributes are concatenated with discrete node labels.
    \item For small-molecule datasets (DHFR, BZR, COX2, AIDS), the additional attributes typically represent 3D coordinates or physicochemical properties.
    \item For ENZYMES, the 18 additional attributes capture various amino acid properties including length, hydrophobicity, and secondary structure characteristics.
    \item PROTEINS includes a single continuous attribute representing physicochemical properties of secondary structure elements.
    \item Social network datasets lack intrinsic node features; practitioners often compute structural features externally.
\end{itemize}

\section{Protocol and Ablation Details}
\subsection{Evaluation protocol.}
The evaluation protocol separates representation learning, reference construction, and score evaluation. Single-domain experiments train on target-domain normal graphs; zero-shot transfer replaces the reference set by source-domain normals; few-shot calibration appends trusted target normals at inference. Test anomaly labels are not inputs to GraphDP or reference-weight estimation. The independently rerun private-reference experiments additionally enforce fixed-final-epoch checkpoint selection without any target metric access.

\begin{center}
\resizebox{\columnwidth}{!}{%
\begin{tabular}{@{}ll@{}}
\toprule
Item & Setting in reported experiments \\
\midrule
Main score & GLASS angular $k$-NN reference scorer \\
Ablation scores & graph-only, text-only, fusion, vMF, and SMS variants \\
Target anomaly labels & never serialized in GraphDP prompts \\
Domain/phase information & allowed as episode-level context \\
Synthetic-negative status & used by training loss; not serialized as text label \\
Zero-shot target data & evaluated only; not inserted into references \\
Few-shot target normals & inserted as references; no gradient updates \\
\bottomrule
\end{tabular}}
\end{center}

\subsection{Structural coverage and encoder scaling}
\phantomsection\label{app:structural_coverage}
ENZYMES and DHFR involve fine-grained biochemical distinctions that compact GraphDP fields may not fully represent. The measured performance gaps are consistent with this explanation, although the benchmarks alone do not establish a causal mechanism. Table~\ref{tab:scale} shows that increasing text-encoder size improves ENZYMES but does not uniformly close the gaps. Richer substructure fields and larger encoders should therefore be assessed jointly with transfer coverage, rather than selected from target test results. All twelve datasets remain in the main comparison; this discussion complements, rather than filters, that evidence.

\subsection{Normal-reference reliability weights}
\phantomsection\label{app:reliability}
The main scorer in Eq.~\eqref{eq:fixedscore} uses weights computed from normal references before evaluation. A fixed ablation sets $w_s=\log D_s/\sum_{u=1}^{S}\log D_u$ and $\alpha_s=1/2$. The default rule estimates reliability from normal-reference compactness and cross-view consistency. Let $u_{g,s}(G)=z^{(s)}(G)$, $u_{t,s}(G)=e^{(s)}(G)$, and let $\bar m$ denote the other modality for $m\in\{g,t\}$. For each normal reference $G_i\in\mathcal{R}$,
\begin{equation}
a_{m,s,i}=d_k\!\left(u_{m,s}(G_i),
\mathcal{R}_{m}^{(s)}\setminus\{u_{m,s}(G_i)\}\right)
\label{eq:loo_score}
\end{equation}
measures leave-one-out compactness. Let $\mu_{m,s}$, $\sigma_{m,s}$, and $\operatorname{IQR}_{m,s}$ be the mean, standard deviation, and interquartile range of these scores. Cross-view neighborhood consistency is
\begin{equation}
b_{m,s,i}=1-\frac{1}{k}
\sum_{j\in\operatorname{NN}_k(u_{m,s}(G_i))}
\left\langle
u_{\bar m,s}(G_i),u_{\bar m,s}(G_j)
\right\rangle,
\label{eq:cross_view_agree}
\end{equation}
where neighbors are selected among normal references in modality $m$. The graph-text alignment residual is
\begin{equation}
q_{s,i}=1-\left\langle z^{(s)}(G_i),e^{(s)}(G_i)\right\rangle .
\label{eq:align_resid}
\end{equation}
The reliability penalty and channel reliability are
\begin{equation}
\begin{aligned}
\pi_{m,s}=&\ \mu_{m,s}
+\frac{1}{2}\left(\sigma_{m,s}+\frac{1}{2}\operatorname{IQR}_{m,s}\right)\\
&+\left(\bar b_{m,s}+\frac{1}{2}\sigma(b_{m,s})\right)
+\frac{1}{2}\left(\bar q_s+\frac{1}{4}\sigma(q_s)\right),\\
\rho_{m,s}=&\operatorname{clip}_{10,90}
\left(\frac{\log D_s}{\max(\pi_{m,s},\varepsilon)}\right).
\end{aligned}
\label{eq:rel_score}
\end{equation}
Here $\operatorname{clip}_{10,90}$ denotes percentile clipping across slice/modality reliability values. The final slice and modality weights are
\begin{equation}
w_s=
\frac{(\rho_{g,s}+\rho_{t,s})\log D_s}
{\sum_{u=1}^{S}(\rho_{g,u}+\rho_{t,u})\log D_u},
\qquad
\alpha_s=
\frac{\rho_{g,s}}{\rho_{g,s}+\rho_{t,s}} .
\label{eq:reliability_weights}
\end{equation}
All statistics in Eqs.~\eqref{eq:loo_score}--\eqref{eq:reliability_weights} use normal references only. Appendix Table~\ref{tab:scoring} shows that this rule adapts to different modality regimes without using anomaly labels.

\begin{table}[t]
\centering
\caption{Scoring ablations (AUROC \%, mean$\pm$std over five seeds). Graph/Text/Fusion use individual or fused angular $k$-NN channels; vMF and SMS instantiate the density view; Fixed uses equal graph/text weights with normal-reference calibration; \ouralg uses the normal-reference reliability weights here.}
\label{tab:scoring}
\resizebox{\columnwidth}{!}{%
\begin{tabular}{@{}lccccccc@{}}
\toprule
Dataset & Graph & Text & Fusion & vMF & SMS & Fixed & \ouralg \\
\midrule
AIDS      & 95.52$\pm$3.29 & 99.79$\pm$0.01 & 97.09$\pm$2.02 & 90.67$\pm$5.86 & 93.92$\pm$3.32 & 99.77$\pm$0.03 & 99.32$\pm$0.59 \\
IMDB-B    & 68.64$\pm$0.39 & 74.12$\pm$0.51 & 74.13$\pm$0.40 & 68.88$\pm$2.39 & 65.74$\pm$1.11 & 77.29$\pm$0.35 & 77.20$\pm$0.35 \\
PROTEINS  & 74.83$\pm$0.91 & 71.31$\pm$1.25 & 77.08$\pm$0.98 & 69.19$\pm$1.65 & 79.70$\pm$0.58 & 80.63$\pm$0.42 & 80.63$\pm$0.42 \\
MUTAG     & 91.07$\pm$1.10 & 74.20$\pm$1.90 & 83.19$\pm$1.42 & 68.96$\pm$0.71 & 82.30$\pm$2.13 & 88.77$\pm$0.74 & 91.42$\pm$0.87 \\
ENZYMES   & 69.12$\pm$0.40 & 62.65$\pm$0.26 & 69.91$\pm$0.64 & 64.41$\pm$0.89 & 62.58$\pm$1.09 & 66.20$\pm$0.46 & 66.20$\pm$0.46 \\
DHFR      & 70.18$\pm$0.39 & 59.09$\pm$0.25 & 68.25$\pm$0.44 & 61.33$\pm$0.66 & 58.26$\pm$0.58 & 67.80$\pm$0.38 & 69.38$\pm$0.37 \\
\bottomrule
\end{tabular}}
\end{table}

\textbf{Algorithmic summary.}
The following pseudocode highlights that GLASS uses one representation pipeline and one scoring family; deployment changes only the reference set.
\begin{center}
\fbox{%
\begin{minipage}{0.94\columnwidth}
\small
\textbf{Input:} normal training graphs $\mathcal{D}_{\rm tr}^0$, optional source references, query graph $G$.\\
\textbf{1. Structural language.} Compute LTDs, graph-level aggregates, motif/spectral summaries, and deterministic GraphDP strings.\\
\textbf{2. Frozen anchors.} Encode GraphDP with the instruction-aware text encoder and obtain native MRL slices by prefix truncation.\\
\textbf{3. Graph alignment.} Train the GIN encoder and graph-side projections on normal graphs using soft cosine alignment, Gram stabilization, and prototype shaping.\\
\textbf{4. Reference construction.} Use target normals for single-domain, source normals for zero-shot, or source plus $k$ target normals for few-shot.\\
\textbf{5. Spherical scoring.} Rank each query by the weighted angular $k$-NN reference-set score in Eq.~\eqref{eq:fixedscore}.
\end{minipage}}
\end{center}

\textbf{Observed transfer regimes.}
The same GraphDP fields can be transferable or insufficient depending on target semantics. Table~\ref{tab:regimes} summarizes the regimes reflected by Tables~\ref{tab:cross} and~\ref{tab:fewshot}.

\begin{table}[ht]
\centering
\caption{Observed transfer regimes and empirical evidence.}
\label{tab:regimes}
\resizebox{\columnwidth}{!}{%
\begin{tabular}{@{}lll@{}}
\toprule
Regime & Evidence & Interpretation \\
\midrule
Related-domain gain & Mol$\to$PROTEINS (+2.9), Mol$\to$D\&D (+4.6) & shared substructure vocabulary \\
Near-lossless coverage & Protein$\to$MUTAG (+0.4), Social/Protein$\to$AIDS (+0.06) & compatible normal geometry \\
Few-shot calibration & PROTEINS $65.59\to79.81$ from $0\to32$ shots & target references repair local density \\
Source dilution & All-domain helps PROTEINS but lowers MUTAG/IMDB-B & heterogeneous references broaden density \\
Hard target shift & REDDIT-B loses 15--20pp under transfer & different social semantics \\
Fine-grained gaps & ENZYMES and DHFR remain below the best baselines & missing biochemical substructure fields \\
\bottomrule
\end{tabular}}
\end{table}

\textbf{Frozen text anchors versus LoRA.}
The text-adaptation ablation starts from the same native-MRL graph encoder and tunes only LoRA adapters in the 0.6B text encoder. Table~\ref{tab:lora} shows that text adaptation is not a stable replacement for frozen anchors: it improves several graph-dominant datasets such as BZR, COX2, NCI1, and REDDIT-B, but degrades PROTEINS, D\&D, AIDS, IMDB-BINARY, and COLLAB, sometimes by a large margin. The warm-up column separates initialization quality from the tuned adaptation. The main method therefore keeps the instruction-aware text space fixed and trains graph-side alignment only.

\begin{table}[ht]
\centering
\caption{Frozen text anchors versus LoRA text adaptation under the default 0.6B setting (AUROC \%). The frozen column is the single-domain \ouralg.}
\label{tab:lora}
\resizebox{\columnwidth}{!}{%
\begin{tabular}{@{}lccccc@{}}
\toprule
Dataset & Frozen & LoRA warm-up & LoRA tuned & $\Delta$Warm-up & $\Delta$Frozen \\
\midrule
MUTAG & 91.42$\pm$0.87 & 90.02$\pm$1.41 & 90.31$\pm$1.22 & +0.29 & $-$1.11 \\
PROTEINS & 80.63$\pm$0.42 & 80.05$\pm$0.47 & 75.36$\pm$0.86 & $-$4.68 & $-$5.27 \\
D\&D & 79.41$\pm$0.49 & 76.23$\pm$1.56 & 76.24$\pm$1.84 & +0.01 & $-$3.17 \\
ENZYMES & 66.20$\pm$0.46 & 65.05$\pm$0.73 & 67.96$\pm$0.63 & +2.91 & +1.76 \\
DHFR & 69.38$\pm$0.37 & 68.25$\pm$0.29 & 68.42$\pm$0.26 & +0.17 & $-$0.96 \\
BZR & 80.93$\pm$0.38 & 81.33$\pm$0.20 & 83.51$\pm$0.22 & +2.19 & +2.58 \\
COX2 & 68.70$\pm$0.41 & 69.78$\pm$0.42 & 70.61$\pm$0.32 & +0.82 & +1.91 \\
AIDS & 99.32$\pm$0.59 & 99.67$\pm$0.06 & 89.13$\pm$2.40 & $-$10.54 & $-$10.19 \\
IMDB-B & 77.20$\pm$0.35 & 77.91$\pm$0.29 & 69.47$\pm$0.39 & $-$8.44 & $-$7.73 \\
NCI1 & 75.86$\pm$0.56 & 78.35$\pm$0.44 & 78.39$\pm$0.43 & +0.04 & +2.53 \\
COLLAB & 80.29$\pm$0.33 & 79.03$\pm$0.10 & 49.49$\pm$0.20 & $-$29.54 & $-$30.80 \\
REDDIT-B & 88.80$\pm$0.69 & 93.50$\pm$0.33 & 93.74$\pm$0.31 & $+$0.24 & +4.94 \\
\bottomrule
\end{tabular}}
\end{table}

\textbf{GraphDP and implementation summary.}
GraphDP is a deterministic key-value serialization. The compact field set is deliberately domain-neutral: it captures structural evidence that can be shared across molecular, protein, and social graphs without inserting class labels or anomaly status.

\begin{center}
\resizebox{\columnwidth}{!}{%
\begin{tabular}{@{}lll@{}}
\toprule
Group & Fields & Purpose \\
\midrule
Identity & graph domain and episode mode & scenario context \\
Size & nodes, edges, density, components & global scale and coverage \\
Degree & mean, std, entropy, quantiles & local connectivity profile \\
Clustering & local mean/std, transitivity & closure/community signal \\
Motifs & triangles, 4-cycles, motif density & local substructure signal \\
Core & maximum core number & centrality/robustness \\
Spectral & gap, Rayleigh quantiles & low-frequency connectivity \\
\bottomrule
\end{tabular}}
\end{center}

\begin{center}
\resizebox{\columnwidth}{!}{%
\begin{tabular}{@{}ll@{}}
\toprule
Component & Reported setting \\
\midrule
Graph encoder & 3-layer GIN, mean/max readout, spectral sketch \\
Graph code/slices & $D_g=2H+r+3=267$ ($H=128,r=8$), $D_s\in\{64,128,256,512\}$ \\
Text encoder & default Qwen3-Embedding-0.6B, 28 layers, 32K context, 1024-d output \\
Text slices & native MRL prefix truncation + $\ell_2$ normalization \\
Alignment loss & soft cosine over paired graph/text slices \\
Projection regularizer & dimension-adaptive Gram stabilization in Eq.~\eqref{eq:orth} \\
Prototype phase & $K=8$ vMF directions, EMA updates, compactness ramp \\
Primary scorer & GLASS angular $k$-NN-limit reference scorer \\
Few-shot adaptation & reference-set update only, no gradient updates \\
\bottomrule
\end{tabular}}
\end{center}

\textbf{Qwen3 embedding backbone.}
The default text encoder is Qwen3-Embedding-0.6B~\citep{qwen3embedding}. It is a dense Qwen3-based causal Transformer embedding model with 0.6B parameters, 28 layers, a 32K-token context window, 1024-dimensional embeddings, MRL support for flexible prefix dimensions, and instruction-aware inputs. In GLASS, the model is frozen and evaluated once per deterministic GraphDP string; only the graph encoder, graph-side projections, and prototype directions are trained. We follow the Qwen3 embedding query convention: an instruction is concatenated with the GraphDP query, and the final embedding is extracted from the last hidden state at the end-of-sequence token.

\textbf{Phase-aware GraphDP prompt examples.}
All text inputs share the same instruction and deterministic key-value fields. The phase token describes how the embedding will be used, not the graph label. Representative prompts are:
\begin{quote}\scriptsize\ttfamily\raggedright
\textbf{Single-domain/reference anchor}\quad
Instruct: Encode this graph description for graph-level anomaly detection.\\
Query: domain=protein; phase=single-reference; nodes=62; edges=145; density=0.076; components=1; degree\_q=2/4/6; degree\_entropy=1.71; clustering=0.18/0.07; motifs=triangles:38, fourcycles:91; core=max:5; spectral=gap:0.12, rayleigh\_q:0.08/0.31/0.66.\\[2pt]
\textbf{Zero-shot target query}\quad
Instruct: Encode this graph description for graph-level anomaly detection.\\
Query: domain=protein; phase=zero-shot-query; source\_domain=mol; nodes=74; edges=161; density=0.060; components=1; degree\_q=2/4/5; clustering=0.14/0.05; motifs=triangles:21, fourcycles:77; core=max:4; spectral=gap:0.09, rayleigh\_q:0.05/0.27/0.59.\\[2pt]
\textbf{Few-shot calibration reference}\quad
Instruct: Encode this graph description for graph-level anomaly detection.\\
Query: domain=protein; phase=few-shot-reference; trusted\_normal=true; nodes=69; edges=152; density=0.065; components=1; degree\_q=2/4/6; clustering=0.17/0.06; motifs=triangles:33, fourcycles:84; core=max:5; spectral=gap:0.11, rayleigh\_q:0.07/0.30/0.63.
\end{quote}
Anomaly status is never included. The field values above are illustrative; the actual prompts are generated deterministically from graph statistics.

\textbf{Complexity analysis.}
For a graph with $n$ nodes and $m$ edges, LTD extraction is $O(m)$ for degree and clustering-style statistics plus the cost of small-motif estimators; the lightweight spectral sketch with $r$ Lanczos/Chebyshev probes is approximately $O(rm)$. GraphDP serialization is linear in the number of retained fields and does not list nodes. Text encoding is a one-time cached cost per graph. With $N$ normal training graphs, $S$ slices, graph dimension $D_g$, and slice dimensions $D_s$, graph-side training costs $O(\sum_G L_{\rm GNN}m_GH+N\sum_sD_sD_g)$ per epoch, plus small prototype updates $O(NSKD_s)$. Inference for one query requires exact reference scoring $O(S|\mathcal{R}|D_s)$ after the graph/text forward pass; approximate cosine nearest-neighbor indexing can replace exact search for large reference sets. Zero-shot and few-shot deployment add no gradient-based adaptation: they only change $|\mathcal{R}|$.

\textbf{Runtime and memory considerations.}
All experiments were conducted on a server with 8 $\times$ NVIDIA RTX 5090 GPUs, dual Intel Xeon Gold 6530 CPUs (128 logical cores), and 512 GiB RAM. The main computational cost is separated into an offline text-embedding stage and an ordinary graph-side training stage. Let $T$ be the GraphDP token length, $L_{\rm txt}$ and $d_{\rm txt}$ the number of Transformer layers and hidden size of the text encoder, $L_{\rm GNN}$ the number of GNN layers, $H$ the GNN hidden size, $S$ the number of slices, and $D_s$ the slice dimensions. The text encoder forward pass costs approximately $O(L_{\rm txt}(T+d_{\rm txt})Td_{\rm txt})$ per graph, but all GraphDP embeddings are pre-computed and cached before graph-side training. Thus the per-epoch training cost is dominated by GNN message passing and graph-side projection/alignment. The final reference-set score is linear in the number of normal references and can be accelerated by approximate cosine nearest-neighbor indices in large deployments. What's more, during inference, increasing the zero-/few-shot reference set changes nearest-neighbor storage and lookup but introduces no additional trainable parameters and negligible computational complexity.

\begin{center}
\resizebox{\columnwidth}{!}{%
\begin{tabular}{@{}lll@{}}
\toprule
Step & Complexity & Comment \\
\midrule
LTD extraction & $O(m)$ + motif estimator & deterministic structural evidence \\
Spectral sketch & $O(rm)$ & $r$ Lanczos/Chebyshev probes \\
GraphDP text encoding & $O(L_{\rm txt}(T+d_{\rm txt})Td_{\rm txt})$ & one-time cached cost \\
Graph-side training & $O(\sum_G L_{\rm GNN}m_GH+N\sum_s D_sD_g)$ & per epoch \\
Prototype shaping & $O(NSKD_s)$ & small $K$ and $S$ \\
Reference scoring & $O(S|\mathcal{R}|D_s)$ & exact cosine $k$-NN \\
\bottomrule
\end{tabular}}
\end{center}

% BEGIN inlined privacy_appendix.tex
\FloatBarrier
\section{Private-Reference Reproducibility}
\label{app:private_protocol}
\subsection*{Source Learning and Record Construction}
The private-reference series uses a fresh public-source training entrypoint rather than the legacy evaluation-driven checkpoint path. It performs 150 fixed epochs of AdamW ($5\times10^{-5}$ learning rate, $10^{-4}$ weight decay, batch size 64), with cosine annealing to $10^{-6}$. The source-normal alignment warmup lasts 20 epochs. Thereafter, eight EMA vMF prototypes (momentum 0.999) and cached synthetic graph perturbations shape the source geometry; the prototype coefficient ramps to 0.3 over 20 epochs. No private target data update these states. The dimension-adaptive Gram penalty has coefficient 0.01 and averages across projection heads.

Canonical per-graph structural/attribute features are zero-padded to a fixed public width of 128; exceeding this schema bound raises an error rather than silently truncating attributes or adapting the model to target examples. Three GIN layers of hidden width 128, concatenated mean/max readout, and an 11-dimensional spectral code produce the 267-dimensional graph representation. Four native text prefixes supervise graph projections. The final model, including BatchNorm running statistics, is frozen before any target embeddings are extracted. Thus the target feature mapping is independent of the private reference set, as required by Proposition~\ref{prop:private_summary}.

The cached instruction is identical across graph records: ``Encode domain-invariant structural features of a graph,'' followed by instructions to emphasize topology, degree profiles, clustering, motifs, spectral gap, and community structure. This private series reuses the content-hashed cross-domain cache, not a new label- or phase-conditioned text model. A legacy normal-status field, where present in a cached description, is constant for both normal and anomalous queries and is not a per-query ground-truth label. No synthetic-negative status is embedded in the text anchors.

We use the minority class as the anomalous class (ties follow sorted class order), with 80\% of other-class graphs in the normal reference pool. Multiclass ENZYMES therefore treats the remaining five classes as normal. Each seed fixes the normal pool order and the held-out test set; smaller reference sets are nested prefixes. Exact source-record matches are excluded using a public-source hash lookup. The DP claim applies to the final eligible fixed-size reference collection, not to privately releasing eligibility counts or adapting $n$ from confidential metadata. Benchmark split counts and metrics are public experimental artifacts, outside a real deployment's private output interface.

\subsection*{Gaussian Calibration and Reuse}
Let $\Phi_{\rm N}$ denote the standard Gaussian CDF and $\Delta=2/n$. The mechanism uses the smallest noise scale satisfying
\begin{equation}
\begin{split}
&\Phi_{\rm N}\!\left(\frac{\Delta}{2\sigma_{\rm DP}}-
\frac{\epsilon\sigma_{\rm DP}}{\Delta}\right)\\[-2pt]
&\quad-e^\epsilon\Phi_{\rm N}\!\left(-\frac{\Delta}{2\sigma_{\rm DP}}-
\frac{\epsilon\sigma_{\rm DP}}{\Delta}\right)\le\delta,
\end{split}
\label{eq:dp_gaussian_calibration}
\end{equation}
implemented by monotone bisection~\citep{balle2018analytic}. For $\epsilon=4$, $\delta=10^{-5}$, this gives $\sigma_{\rm DP}=0.2703$ at $n=8$ and $0.0169$ at $n=128$. The public feature seed is fixed by channel, not optimized using test labels. Sine/cosine pairs ensure exact unit feature norm in real arithmetic. Numerical tests check the joint norm, replacement sensitivity, analytic calibration, kernel approximation, and the aggregate-noise law.

All private scores read only a released summary and public query features; no raw target-reference LOO normalization, reliability weights, or threshold fitting is performed. Multiple query evaluations of the same summary are postprocessing. In contrast, releasing additional summaries for the same confidential records requires composition; the separate budgets and ten draws in these public-data simulations are not a free multi-release private deployment.

The formal guarantee assumes the ideal Gaussian mechanism. Our benchmark implementation uses reproducible pseudorandom noise for scientific simulation, not a hardened production DP service. A private deployment must use an appropriate secure noise implementation, protect noise randomness from disclosure, enforce the release ledger, and avoid exposing reference embedding files. Public random-feature seeds may be shared; private noise seeds may not. Neither a small measured attack success rate nor a low utility loss replaces the mechanism's privacy proof.

\subsection*{Distributed Summary Simulation and Statistics}
For $J$ fixed disjoint sites of public sizes $n_j$, site $j$ adds noise with scale $2\gamma/n_j$, where $\gamma$ is the analytic Gaussian multiplier for a shared $(\epsilon,\delta)$. Aggregating with weights $n_j/n$ gives
\[
\operatorname{Var}(\eta_{\rm agg,\ell})
=\sum_{j=1}^{J}\left(\frac{n_j}{n}\frac{2\gamma}{n_j}\right)^2
=J\left(\frac{2\gamma}{n}\right)^2.
\]
Our simulation samples exactly this aggregate distribution. Because the statistic is linear, different fixed partitions with the same total count and site number yield the same aggregate distribution; an artificial IID/non-IID split table would add no evidence for this mechanism. Fixed record membership is essential: duplicate users across sites or data-dependent reassignment require a different privacy analysis. This is graph-record privacy with a trusted curator within each site, not local privacy for each incoming graph or client-level privacy for an entire institution.

We first average ten noise replicates within each model/split seed, then report the mean and sample standard deviation over four seeds. Within-model noise variability is separately retained in the released experiment summaries. Kernel/resolution diagnostics are all reported and never used to choose a dataset-specific main scorer. The full comparison below includes below-chance settings; score signs are not flipped after inspecting labels.

\subsection*{Approximation Costs and Difficult Settings}
\phantomsection\label{app:private_failures}
Across all 18 source--target settings, macro-average AUROC is 65.85 for target-reference 5-NN, 60.67 for exact finite-$\kappa$ KDE, 60.23 for the nonprivate random-feature summary, and 60.08 for its $\epsilon=4$ release. The noise-only difference is therefore 0.15 points, while the total difference from 5-NN is 5.77 points. The private extension is not a lossless implementation of exact nearest neighbors. The fixed global bandwidth is the largest source of aggregate degradation in this decomposition, although individual settings can benefit from smoothing.

Protein$\to$AIDS illustrates the distinction: exact 5-NN reaches 98.74, exact finite-bandwidth KDE 89.75, and the nonprivate summary 88.41. Its private average of 88.67 at $\epsilon=4$ fluctuates slightly above the nonprivate summary; this is not evidence of a systematic accuracy benefit from privacy noise. Small changes of this size should be interpreted alongside the retained within-model noise variability.

Fine-grained and shifted targets can remain difficult even without privacy noise. For example, the Mol$\to$DHFR target-only summary obtains 43.31 before noise and 43.36 at $\epsilon=4$; Mol$\to$ENZYMES obtains 49.70 and 49.58. These are failures of the representation/reference-density combination in this independently trained series, not an effect that can be repaired simply by increasing $\epsilon$. No score signs, source pools, or checkpoint epochs are selected retrospectively to improve these settings.

With one or a few private normals, both incomplete coverage and large $1/n$ noise produce low or variable scores. The public-source prior can reduce variance but retain a mismatched density. At $n=128$, source mixing lowers Mol$\to$PROTEINS from 73.78 to 72.79 and Protein$\to$AIDS from 88.67 to 85.55. Similarly, graph-only private scoring exceeds the fixed joint mixture on Protein$\to$AIDS. These outcomes motivate future public-validation-based source weighting or more local private density summaries; they are not used as a hidden modality or scorer selector in the present results.

\makeatletter
\setlength{\@dblfptop}{0pt}
\setlength{\@dblfpsep}{18pt}
\setlength{\@dblfpbot}{0pt plus 1fil}
\makeatother
% BEGIN inlined private_results/all_targets.tex
\begin{table*}[h!]
\centering
\footnotesize
\setlength{\tabcolsep}{3pt}
\renewcommand{\arraystretch}{1.08}
\caption{Private target-reference calibration, AUROC (\%). All columns use independently retrained public-source encoders. NN denotes angular mean 5-NN; Exact KDE and Sketch use $\kappa=4$. Private columns use the same target-only sketch, $\delta=10^{-5}$, and one total graph--text budget. Values are mean $\pm$ sample standard deviation over four model/split seeds, after averaging ten noise draws per private condition. No checkpoint or scorer is selected using target labels. All 18 disjoint source--target dataset settings are shown; MUTAG has 100 available reference graphs.}
\label{tab:dp_all}
\resizebox{\textwidth}{!}{%
\begin{tabular}{@{}llrccccccc@{}}
\toprule
Source & Target & $n$ & Source NN & Target NN & Exact KDE & Sketch & $\epsilon=1$ & $\epsilon=4$ & $\epsilon=8$ \\
\midrule
Mol & BZR & 128 & $39.42\!\pm\!5.51$ & $77.74\!\pm\!1.49$ & $62.76\!\pm\!2.10$ & $61.37\!\pm\!3.54$ & $59.03\!\pm\!3.07$ & $60.92\!\pm\!3.62$ & $61.03\!\pm\!3.46$ \\
Mol & COLLAB & 128 & $26.81\!\pm\!5.83$ & $59.94\!\pm\!11.26$ & $59.56\!\pm\!15.69$ & $55.09\!\pm\!17.16$ & $54.28\!\pm\!11.70$ & $55.14\!\pm\!15.41$ & $55.40\!\pm\!16.76$ \\
Mol & COX2 & 128 & $56.48\!\pm\!8.28$ & $59.61\!\pm\!3.21$ & $46.00\!\pm\!4.58$ & $45.77\!\pm\!4.14$ & $45.98\!\pm\!3.48$ & $45.75\!\pm\!4.30$ & $45.90\!\pm\!3.94$ \\
Mol & D\&D & 128 & $59.54\!\pm\!6.19$ & $71.74\!\pm\!2.48$ & $66.67\!\pm\!3.46$ & $66.86\!\pm\!3.27$ & $64.04\!\pm\!2.45$ & $66.82\!\pm\!3.34$ & $66.78\!\pm\!3.43$ \\
Mol & DHFR & 128 & $52.73\!\pm\!2.99$ & $51.80\!\pm\!4.98$ & $43.02\!\pm\!4.31$ & $43.31\!\pm\!4.48$ & $44.99\!\pm\!3.58$ & $43.36\!\pm\!4.50$ & $43.64\!\pm\!4.29$ \\
Mol & ENZYMES & 128 & $48.99\!\pm\!4.79$ & $49.11\!\pm\!3.45$ & $49.04\!\pm\!5.19$ & $49.70\!\pm\!4.86$ & $49.53\!\pm\!3.56$ & $49.58\!\pm\!4.78$ & $49.59\!\pm\!4.95$ \\
Mol & IMDB-B & 128 & $47.55\!\pm\!5.76$ & $57.85\!\pm\!2.87$ & $59.91\!\pm\!4.88$ & $59.93\!\pm\!4.89$ & $57.95\!\pm\!3.21$ & $59.70\!\pm\!4.54$ & $59.89\!\pm\!4.91$ \\
Mol & PROTEINS & 128 & $60.55\!\pm\!5.18$ & $72.34\!\pm\!1.85$ & $74.20\!\pm\!1.83$ & $74.03\!\pm\!1.93$ & $70.86\!\pm\!2.34$ & $73.78\!\pm\!1.87$ & $73.89\!\pm\!2.00$ \\
Mol & REDDIT-B & 128 & $65.51\!\pm\!8.19$ & $62.21\!\pm\!3.09$ & $59.21\!\pm\!1.43$ & $59.49\!\pm\!1.13$ & $58.87\!\pm\!1.87$ & $59.31\!\pm\!1.14$ & $59.40\!\pm\!0.96$ \\
Protein & AIDS & 128 & $29.69\!\pm\!14.76$ & $98.74\!\pm\!0.96$ & $89.75\!\pm\!5.83$ & $88.41\!\pm\!5.65$ & $82.31\!\pm\!4.50$ & $88.67\!\pm\!4.60$ & $88.32\!\pm\!5.54$ \\
Protein & BZR & 128 & $52.88\!\pm\!6.28$ & $70.78\!\pm\!1.98$ & $60.97\!\pm\!3.08$ & $58.76\!\pm\!3.65$ & $55.80\!\pm\!1.14$ & $58.75\!\pm\!3.15$ & $58.84\!\pm\!3.41$ \\
Protein & COLLAB & 128 & $22.86\!\pm\!3.34$ & $83.46\!\pm\!5.68$ & $77.09\!\pm\!2.85$ & $76.75\!\pm\!4.04$ & $72.28\!\pm\!3.94$ & $76.13\!\pm\!4.59$ & $76.56\!\pm\!4.12$ \\
Protein & COX2 & 128 & $48.69\!\pm\!10.89$ & $57.31\!\pm\!2.08$ & $49.91\!\pm\!2.98$ & $50.24\!\pm\!2.91$ & $51.55\!\pm\!1.21$ & $50.19\!\pm\!2.63$ & $50.54\!\pm\!2.72$ \\
Protein & DHFR & 128 & $52.30\!\pm\!4.23$ & $50.12\!\pm\!3.84$ & $47.55\!\pm\!2.06$ & $47.25\!\pm\!1.55$ & $48.13\!\pm\!2.91$ & $48.06\!\pm\!1.53$ & $47.49\!\pm\!1.58$ \\
Protein & IMDB-B & 128 & $64.29\!\pm\!4.95$ & $67.74\!\pm\!2.63$ & $61.21\!\pm\!7.67$ & $62.16\!\pm\!6.16$ & $60.62\!\pm\!4.76$ & $62.38\!\pm\!5.70$ & $62.15\!\pm\!5.98$ \\
Protein & MUTAG & 100 & $82.67\!\pm\!7.73$ & $80.84\!\pm\!5.00$ & $79.21\!\pm\!3.46$ & $77.35\!\pm\!2.98$ & $64.74\!\pm\!6.10$ & $75.66\!\pm\!4.03$ & $77.47\!\pm\!3.51$ \\
Protein & NCI1 & 128 & $58.31\!\pm\!0.89$ & $50.97\!\pm\!1.69$ & $49.04\!\pm\!3.92$ & $50.16\!\pm\!2.95$ & $50.05\!\pm\!3.14$ & $49.40\!\pm\!3.53$ & $49.99\!\pm\!2.99$ \\
Protein & REDDIT-B & 128 & $66.59\!\pm\!13.14$ & $62.91\!\pm\!6.27$ & $56.90\!\pm\!13.34$ & $57.46\!\pm\!13.78$ & $58.25\!\pm\!15.05$ & $57.80\!\pm\!14.36$ & $57.08\!\pm\!13.27$ \\
\bottomrule
\end{tabular}}
\end{table*}

% END inlined private_results/all_targets.tex

% BEGIN inlined private_results/combined_diagnostics.tex
\begin{table*}[t]
\begin{minipage}[t]{0.48\textwidth}
\centering\scriptsize
\setlength{\tabcolsep}{3pt}
\caption{Public representation and kernel settings: target-only DP AUROC at $n=128$, $\epsilon=4$. These diagnostics do not select the main configuration.}
\label{tab:dp_resolution}
\begin{tabular}{@{}lrrrr@{}}
\toprule
Setting & PROTEINS & AIDS & IMDB-B & Noise SD$^*$ \\
\midrule
64 only & 73.35 & 85.77 & 59.56 & 1.79 \\
128 only & 73.82 & 88.74 & 59.41 & 1.59 \\
64+128 (main) & 73.78 & 88.67 & 59.70 & 1.72 \\
256 only & 73.55 & 91.04 & 58.50 & 1.43 \\
All four slices & 73.82 & 89.51 & 59.26 & 1.64 \\
$M=64$ & 74.12 & 88.58 & 59.59 & 1.51 \\
$M=256$ & 73.87 & 89.68 & 59.86 & 1.61 \\
$\kappa=1$ & 72.80 & 88.04 & 60.12 & 2.21 \\
$\kappa=16$ & 69.24 & 90.00 & 59.84 & 1.49 \\
\bottomrule
\multicolumn{5}{l}{$^*$Mean within-model AUROC standard deviation over noise draws.}
\end{tabular}

\end{minipage}
\hfill
\begin{minipage}[t]{0.48\textwidth}
\centering\scriptsize
\setlength{\tabcolsep}{3pt}
\caption{Privacy metrics and distributed-noise simulation at $n=128$, $\epsilon=4$. AUPRC/FPR95 use the target-only joint sketch. Distributed AUROC uses the public-source mixture at fixed total $n$ and fixed disjoint site membership.}
\label{tab:dp_extra}
\begin{tabular}{@{}lrrrrr@{}}
\toprule
Target & AUPRC & FPR95 & $J=1$ & $J=2$ & $J=4$ \\
\midrule
PROTEINS & 89.99 & 77.37 & 72.79 & 72.56 & 72.20 \\
D\&D & 87.38 & 86.67 & 66.03 & 65.31 & 65.09 \\
AIDS & 87.27 & 28.45 & 85.55 & 85.33 & 84.77 \\
IMDB-B & 86.96 & 91.92 & 59.99 & 60.12 & 59.49 \\
\bottomrule
\end{tabular}

\end{minipage}
\end{table*}

% END inlined private_results/combined_diagnostics.tex

\FloatBarrier

% END inlined privacy_appendix.tex

\end{document}